\numberwithin{equation}{section}
\theoremstyle{plain}
\newtheorem{prop}{Remark}
\definecolor{lightpurple1}{RGB}{240, 230, 245} 
\definecolor{purple1}{RGB}{102, 0, 153}
\newmdtheoremenv[outerlinewidth=2,
leftmargin=40,
rightmargin=40,
backgroundcolor=lightpurple1,
outerlinecolor=purple1,
innertopmargin=0pt,
splittopskip=\topskip,
skipbelow=\baselineskip,
skipabove=\baselineskip,
ntheorem]{obs}%
{Observation}
\DeclareMathOperator*{\argmax}{\arg\max}
\newcommand{\tp}{\textcolor{black}}
\begin{document}

\begin{frontmatter}


\title{Approximate blocked Gibbs sampling for Bayesian neural networks}
\runtitle{Approximate blocked Gibbs sampling for Bayesian neural networks}



\author{\fnms{Theodore} \snm{Papamarkou}\corref{}\ead[label=e1]{
t.papamarkou@manchester.ac.uk}}
\address{Department of Mathematics,
The University of Manchester, Manchester, UK}

\runauthor{T. Papamarkou}

\begin{abstract}
In this work, minibatch MCMC sampling
for feedforward neural networks is made more feasible.
To this end,
it is proposed to sample subgroups of parameters
via a blocked Gibbs sampling scheme.
By partitioning the parameter space,
sampling is possible irrespective of layer width.
It is also possible to alleviate vanishing acceptance rates
for increasing depth
by reducing the proposal variance in deeper layers.
Increasing the length of a non-convergent chain
increases the predictive accuracy in classification tasks,
so avoiding vanishing acceptance rates and
consequently enabling longer chain runs
have practical benefits.
Moreover, non-convergent chain realizations
aid in the quantification of predictive uncertainty.
An open problem is how to perform minibatch MCMC sampling
for feedforward neural networks
in the presence of augmented data.
\end{abstract}



\begin{keyword}
\kwd{Approximate MCMC}
\kwd{Bayesian inference}
\kwd{Bayesian neural networks}
\kwd{blocked Gibbs sampling}
\kwd{minibatch sampling}
\kwd{posterior predictive distribution}
\end{keyword}

\end{frontmatter}

\section{Introduction}

\textbf{Scope.}
This paper renders
feedforward neural networks
more amenable to approximate MCMC sampling
of their parameters
by splitting the parameters into subgroups.
Moreover, it identifies several advantages
of such a sampling approach.

\textbf{Motivation.}
Why consider approximate MCMC 
sampling algorithms for
deep learning?
The answer stems from a general merit of MCMC,
namely uncertainty quantification.
This work demonstrates
how approximate MCMC sampling 
of neural network parameters
quantifies predictive uncertainty 
in classification problems.

\textbf{Limitations.}
Several impediments have inhibited the
adoption of MCMC in deep learning;
to name three notorious problems,
low acceptance rate,
high computational cost and
lack of convergence
typically occur.
See~\cite{papamarkou2022} for a relevant review.

\textbf{Potential.}
Empirical evidence herein suggests
a less dismissive view of
approximate MCMC in deep learning.
Firstly,
a sampling mechanism that takes into account the
neural network structure and that
partitions the parameter space into
smaller parameter blocks
retains higher acceptance rate.
Secondly,
minibatch MCMC sampling of neural network parameters
mitigates the computational bottleneck
induced by big data.
Bayesian marginalization,
which is used for making predictions and
for assessing predictive performance,
is also computationally expensive.
However, Bayesian marginalization
is embarrassingly parallelizable
across test points and along Markov chain length.
Thirdly,
if assessment of predictive uncertainty
via neural networks
is the intended outcome,
then MCMC convergence in parameter space
is viewed as a stepping stone
rather than as a pre-requirement for such an outcome.
A non-convergent Markov chain
acquires valuable predictive information.
\tp{In fact, it has been shown that
the posterior predictive density
in Bayesian neural networks can be restricted
to a symmetry-free subset of the parameter space~\citep{wiese2023}.}

\textbf{Contributions.}
The main contribution of this paper is
\tp{to propose minibatch blocked Gibbs sampling
for feedforward neural networks and
and to experimentally corroborate the
feasibility of such a sampling approach}.
Without optimizing prior specification,
vanishing acceptance rates are overcome
by partitioning the parameter space
into small blocks. Several observations
are drawn from an experimental study
of the proposed sampling scheme
for feedforward neural networks.
Firstly, it is observed that
partitioning the parameter space
allows to sample from it
under increasing width.
Secondly, such partitioning alleviates
vanishing acceptance rates in deeper layers
by reducing the proposal variance
as depth increases.
Thirdly, it is pointed out that
increasing the batch size
increases the predictive accuracy as expected,
as long as the batch size does not become large
to the point of yielding vanishing acceptance rates.
Fourthly, it is demonstrated that
letting the realization of a non-convergent chain
run longer increases the predictive accuracy.
Fifthly, it is confirmed that one
of the open problems is sampling
in the presence of augmented data.
Finally, it is demonstrated that
non-convergent chain realizations
aid in the quantification of predictive uncertainty.

\textbf{Paper structure.}
The paper is structured as follows.
Section~\ref{background}
reviews the MCMC literature
for deep learning.
Section~\ref{preliminaries}
revises some basic knowledge,
including the
Bayesian multilayer perceptron (MLP) model
and blocked Gibbs sampling.
Section~\ref{methodology}
introduces a finer node-blocked Gibbs (FNBG) algorithm
to sample MLP parameters.
Section~\ref{experiments}
utilizes FNBG sampling to
fit MLPs to three training datasets,
making predictions on three associated test datasets.
In Section~\ref{experiments},
numerous observations are made about the scope
of approximate MCMC in MLPs.
Section~\ref{discussion}
concludes the paper with a discussion
about future research directions
and about associated limitations.

\section{Literature review}
\label{background}

This section reviews
the literature on MCMC for neural networks.
Several other reviews of the topic exist,
see for instance
~\cite{titterington2004, wenzel2020, izmailov2021, papamarkou2022}.
New MCMC developments for neural networks,
which have appeared after the aforementioned reviews,
are included herein.

Four research directions have been mainly taken
to develop MCMC algorithms for neural networks.
Initially,
sequential Monte Carlo (SMC)
and reversible jump MCMC were applied
on feedforward neural networks.
At a second wave of development,
minibatch MCMC algorithms
became a mainstream approach.
More recently,
the focus has shifted to
Gibbs sampling algorithms and to
the construction of priors for Bayesian neural networks.

\subsection{SMC \& reversible jump MCMC}

In early stages of MCMC developments for neural networks,
SMC and reversible jump MCMC were applied
on MLPs and radial basis function networks
~\citep{andrieu1999, freitas1999, andrieu2000, freitas2001}.
For a historical context
of Bayesian approaches to neural networks, see
~\cite{titterington2004, papamarkou2022}.

\subsection{Minibatch MCMC}
\label{subsec:minibatch_mcmc}

In minibatch MCMC,
a target density is evaluated
on a subset (minibatch) of the data,
thus avoiding the computational cost
of MCMC iterations based on the entire data.
A stochastic gradient MCMC (SG-MCMC) algorithm
is a minibatch MCMC algorithm that uses
the gradient of the target density.
~\cite{welling2011} have employed the notion of minibatch
to develop a stochastic gradient Langevin dynamics (SG-LD)
Monte Carlo algorithm,
which is the first instance of SG-MCMC.
~\cite{chen2014} have introduced
stochastic gradient Hamiltonian Monte Carlo (SG-HMC),
which is another instance of SC-MCMC,
and applied it to infer the parameters of a Bayesian neural network
fitted to the MNIST dataset~\citep{lecun1998}.

SG-LD and SG-HMC are two SG-MCMC algorithms
that initiated approximate MCMC research
in machine learning.
Several variants of SG-MCMC have appeared ever since.
\cite{gong2018} have proposed an SG-MCMC scheme
that generalizes Hamiltonian dynamics with state-dependent drift and diffusion,
and have demonstrated the performance of this scheme
on convolutional and on recurrent neural networks.
\cite{zhang2020} have proposed cyclical SG-MCMC,
a tempered version of SG-LD with a
cyclical stepsize schedule.
Moreover,~\cite{zhang2020}
have showcased the performance of
cyclical SG-MCMC on a ResNet-18~\citep{he2016}
fitted to the CIFAR-10 and CIFAR-100 datasets~\citep{krizhevsky2009}.
\cite{alexos2022} have introduced structured SG-MCMC,
a combination of SG-MCMC and
structured variational inference~\citep{saul1995}.
Structured SG-MCMC employs SG-LD or SG-HMC to 
sample from a
factorized variational parameter posterior density.
\cite{alexos2022} have tested the performance
of structured SG-MCMC on ResNet-20~\citep{he2016} architectures
fitted to the
CIFAR-10,
SVHN~\citep{netzer2011} and
fashion MNIST~\citep{xiao2017} datasets.

\subsection{Gibbs sampling}
\label{subsec:gibbs_sampling}

Various Gibbs sampling algorithms have been developed
recently with large-scale inference in mind.
~\cite{bouchard2017} have introduced 
the particle Gibbs split-merge sampler
and have explored its performance
on four high dimensional datasets.
Split Gibbs samplers based on
the alternating direction method of multipliers optimization algorithm
have been developed to perform Bayesian inference
on large datasets and potentially on high-dimensional models
~\citep{vono2019,vono2022}.
Despite not having been applied so far to neural networks,
such particle Gibbs and split Gibbs samplers demonstrate that
the idea of splitting parameters or auxiliary variables into subgroups
provides one way of attacking the problem of large-scale inference.

\cite{grathwohl2021} have introduced
the Gibbs-with-gradients (GWG) sampler,
a general and scalable approximate sampling strategy
for probabilistic models with discrete variables.
GWG is related to the adaptive Gibbs sampler~\citep{latuszynski2013}.
~\cite{grathwohl2021} have trained GWG
on restricted Boltzmann machines,
which are generative stochastic neural networks,
and have compared GWG to blocked Gibbs sampling,
using samples from the latter as the ground truth.

Minibatch MCMC (Subsection~\ref{subsec:minibatch_mcmc}) and
Gibbs samplers (current Subsection~\ref{subsec:gibbs_sampling})
do not constitute two mutually exclusive classes of algorithms.
To elaborate on the involved ontology
of minibatch MCMC and Gibbs samplers,
three remarks are made.
Firstly, HMC can be formulated
as a Gibbs sampler~\citep{girolami2011}.
Secondly, each parameter subgroup
in blocked Gibbs sampling can be updated
via an MCMC sampling step.
For instance, if each parameter subgroup
is updated via a Metropolis-Hastings (MH),
Langevin dynamics (LD) or HMC sampling step,
then the corresponding sampler is known as
MH-within-Gibbs, LD-within-Gibbs or HMC-within-Gibbs.
Thirdly, the terminology SG-LD and SG-HMC
is used in software documentation
to refer to algorithms that sample
all neural network parameters at one sweep or layer-wise.
Nevertheless, when parameter sampling is conducted layer-wise,
SG-LD and SG-HMC are misnomers,
and the correct sampler names are
SG-LD-within-Gibbs and SG-HMC-within-Gibbs, respectively.

\subsection{Prior specification}

Prior specification for neural networks
was considered on the eve of the twenty-first century,
see~\cite{papamarkou2022} for a relevant review.
Research on prior specification for neural networks
has resurged recently,
as ridgelet priors~\citep{matsubara2021}
and functional priors~\citep{tran2022}
have been introduced.
The functional priors proposed by~\cite{tran2022}
have been designed for performing
approximate MCMC sampling in Bayesian deep learning.

\section{Preliminaries}
\label{preliminaries}


This section revises two topics,
the Bayesian MLP model for supervised classification
(Subsection~\ref{mlp_model})
and blocked Gibbs sampling
(Subsection~\ref{bgs}).
For the Bayesian MLP model,
the parameter posterior density
and posterior predictive probability mass function (pmf)
are stated.
Blocked Gibbs sampling provides a starting point
in developing the algorithm of Section~\ref{methodology}
for sampling from the MLP parameter posterior density.

\subsection{The Bayesian MLP model}
\label{mlp_model}


An MLP is a feedforward neural network comprising
an input layer, one or more hidden layers and an output layer
~\citep{rosenblatt1958,minsky1988,hastie2016}.
For a fixed natural number $\rho \ge 2$,
an index $j\in\{0,1,\dots,\rho\}$ indicates the layer.
In particular,
$j=0$ refers to the input layer,
$j\in\{1,2,\dots,\rho-1\}$ to one of the $\rho-1$ hidden layers,
and $j=\rho$ to the output layer.
Let $\kappa_{j}$ be the number of nodes in layer $j$,
and let
$\kappa_{0:\rho} = (\kappa_{0},\kappa_{1},\dots,\kappa_{\rho})$
be the sequence of node counts per layer.
$\mbox{MLP}(\kappa_{0:\rho})$
denotes an MLP with $\rho-1$ hidden layers and
$\kappa_j$ nodes at layer $j$.

An $\mbox{MLP}(\kappa_{0:\rho})$
with $\rho-1$ \textit{hidden layers} and
$\kappa_j$ \textit{nodes} at layer $j$
is defined recursively as
\begin{align}
\label{mlp_g}
g_{j}(x_{i},\theta_{1:j}) &=
w_{j}h_{j-1}(x_{i},\theta_{1:j-1})+b_{j},\\
\label{mlp_h}
h_{j}(x_{i}, \theta_{1:j}) &=
\phi_{j}(g_{j}(x_{i},\theta_{1:j})),
\end{align}
for $j\in\{1,2,\dots,\rho\}$.
An input data point $x_{i}\in\mathbb{R}^{\kappa_0}$
is passed to the \textit{input layer} $h_{0}(x_{i})=x_{i}$,
yielding vector
$g_{1}(x_{i}, \theta_{1})=w_{1}x_{i}+b_{1}$ 
in the first hidden layer.
The parameters $\theta_{j} = (w_{j}, b_{j})$
at layer $j$
consist of weights $w_{j}$ and
biases $b_{j}$.
The weight matrix $w_{j}$ has $\kappa_{j}$ rows and 
$\kappa_{j-1}$ columns,
while the vector $b_{j}$ of biases has length $\kappa_{j}$.
All weights and biases up to layer $j$ are denoted by
$\theta_{1:j} = (\theta_{1},\theta_{2},\dots,\theta_{j})$.
An \textit{activation} function
$\phi_{j}$
is applied elementwise to
\textit{pre-activation} vector
$g_{j}(x_{i},\theta_{1:j})$,
and returns
\textit{post-activation} vector
$h_{j}(x_{i}, \theta_{1:j})$.
Concatenating all $\theta_{j},~j\in\{1,2,\dots,\rho\}$,
gives a parameter vector $\theta=\theta_{1:\rho}\in\mathbb{R}^n$
of length $n=\sum_{j=1}^{\rho}\kappa_{j}(\kappa_{j-1}+1)$.


\begin{figure}[t]
	\centering
	\includegraphics[width=1\linewidth]{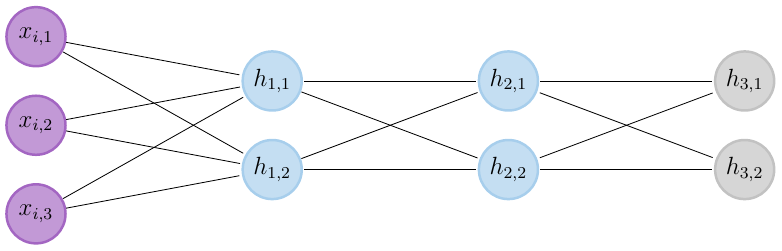}
	\caption{A graph visualization of
		$\mbox{MLP}(3, 2, 2, 2)$.
		Purple, blue and gray nodes
		correspond to input data,
		to hidden layer post-activations
		and to output layer (softmax) post-activations
		used for making predictions.}
	\label{mlp_example}
\end{figure}

$w_{j,k,l}$ denotes the
$(k,l)$-th element of weight matrix $w_{j}$.
Analogously,
$b_{j,k},~x_{i,k},~g_{j,k}$ and $h_{j,k}$ correspond to
the $k$-th coordinate
of bias $b_{j}$,
of input $x_{i}$,
of pre-activation $g_{j}$ and
of post-activation $h_{j}$.

MLPs are typically visualized as graphs. For instance,
Figure~\ref{mlp_example} displays a graph representation of
$\mbox{MLP}(\kappa_0=3,\kappa_1=2,\kappa_2=2,\kappa_3=2)$,
which has
an input layer with $\kappa_0=3$ nodes (purple),
two hidden layers with $\kappa_1=\kappa_2=2$ nodes each (blue),
and an output layer with $\kappa_3=2$ nodes (gray).
Purple nodes indicate observed variables (input data),
whereas blue and gray nodes indicate latent variables
(post-activations).

Let
$\mathcal{D}_{1:s}=\{(x_{i}, y_{i}):~i=1,2,\dots,s\}$
be a training dataset.
Each training data point $(x_i, y_i)$
includes
an input $x_{i}\in\mathbb{R}^{\kappa_0}$
and a discrete output (label)
$y_i\in\{1,2,\dots,\kappa_{\rho}\},~\kappa_{\rho}\ge 2$.
Moreover, let $(x, y)$ be a test point consisting of
an input $x\in\mathbb{R}^{\kappa_0}$ and of a label
$y\in\{1,2,\dots,\kappa_{\rho}\}$.
The \textit{supervised classification problem}
under consideration
is to predict test label $y$ given test input $x$
and training dataset $\mathcal{D}_{1:s}$.
An $\mbox{MLP}(\kappa_{0:\rho})$,
whose output layer
has $\kappa_{\rho}$ nodes and
applies the \textit{softmax activation function} $\phi_{\rho}$,
is used to address this problem.
The softmax activation function
at the output layer expresses as
$\phi_{\rho}(g_{\rho})=\exp{(g_{\rho})} / 
\sum_{k=1}^{\kappa_{\rho}}\exp{(g_{\rho,k})}$.

It is assumed that the training labels
$y_{1:s}=(y_{1},y_{2},\dots,y_{s})$
are outcomes of $s$ independent draws from
a categorical pmf
with event probabilities given by
$\mbox{Pr}(y_{i}=k \vert x_{i},\theta)
= h_{\rho,k}(x_{i},\theta)
= \phi_{\rho}(g_{\rho,k}(x_{i},\theta))$,
where $\theta$ is the set of
$\mbox{MLP}(\kappa_{0:\rho})$
parameters.
It follows that the
likelihood function
for the $\mbox{MLP}(\kappa_{0:\rho})$
model in supervised classification is
\begin{equation}
\label{mc_mlp_lik}
\mathcal{L}(y_{1:s} \vert x_{1:s},\theta)=
\prod_{i=1}^{s}\prod_{k=1}^{\kappa_{\rho}}
(h_{\rho,k}(x_{i},\theta))^{\mathbbm{1}_{\{y_{i}=k\}}},
\end{equation}
where
$x_{1:s}=(x_{1},x_{2},\dots,x_{s})$
are the training inputs
and $\mathbbm{1}$ denotes the indicator function.
Interest is in sampling from the
parameter posterior density
\begin{equation}
\label{param_posterior}
p(\theta \vert x_{1:s}, y_{1:s})
\propto
\mathcal{L}(y_{1:s} \vert x_{1:s}, \theta)
\pi (\theta),
\end{equation}
given
the likelihood function
$\mathcal{L}(y_{1:s} \vert x_{1:s}, \theta)$
of Equation~\eqref{mc_mlp_lik}
and a parameter prior $\pi(\theta)$.
For brevity,
the parameter posterior density
$p(\theta \vert x_{1:s}, y_{1:s})$
is alternatively
denoted by $p(\theta \vert D_{1:s})$.

By integrating out
parameters $\theta$,
the posterior predictive pmf of test label $y$
given test input $x$ and training dataset $\mathcal{D}_{1:s}$
becomes
\begin{equation}
\label{pred_posterior}
p(y \vert x, \mathcal{D}_{1:s}) =
\int 
\mathcal{L}(y \vert x, \theta)
p(\theta \vert \mathcal{D}_{1:s})
d\theta ,
\end{equation}
where $\mathcal{L}$
is the likelihood function
of Equation~\eqref{mc_mlp_lik} evaluated on $(x, y)$,
and $p(\theta \vert \mathcal{D}_{1:s})$ is the 
parameter posterior density
of Equation~\eqref{param_posterior}.
The integral in Equation~\eqref{pred_posterior}
can be approximated via Monte Carlo integration,
yielding the approximate
posterior predictive pmf
\begin{equation}
\label{pred_posterior_approx}
\hat{p}(y \vert x,\mathcal{D}_{1:s}) \simeq
\sum_{t=1}^{v} p(y \vert x, \omega_{t}) ,
\end{equation}
where $(\omega_{1},\omega_{2},\ldots,\omega_{v})$
is a Markov chain realization
obtained from the parameter posterior density
$p ({\theta \vert D_{1:s}})$.
Maximizing the
approximate posterior predictive pmf
$\hat{p}(y \vert x,D_{1:s})$ of
Equation~\eqref{pred_posterior_approx}
yields the prediction
\begin{equation}
\label{multi_class_pred}
\hat{y} =
\argmax_{y} {
	\{\hat{p}(y \vert x,D_{1:s})\}
}
\end{equation}
for test label $y\in\{1,2,\dots,\kappa_{\rho}\}$.

The likelihood function for an
MLP model with $\kappa_{\rho}\ge 2$ output layer nodes,
as stated in Equation~\eqref{mc_mlp_lik},
is suited for multiclass classification with
$\kappa_{\rho}$ classes.
For binary classification,
which involves two classes,
Equation~\eqref{mc_mlp_lik}
is related to an
MLP with $\kappa_{\rho}=2$ output layer nodes.
There is an alternative likelihood function based on
an MLP model with a single output layer node,
which can be used for binary classification;
see~\cite{papamarkou2022} for details.

\subsection{Blocked Gibbs sampling}
\label{bgs}

A blocked Gibbs sampling algorithm samples
groups (blocks) of two or more parameters
conditioned on all other other parameters,
rather than sampling each parameter individually.
The choice of parameter groups affects
the rate of convergence~\citep{roberts1997}.
For instance,
breaking down the parameter space into
statistically independent groups of correlated parameters
speeds up convergence.

To sample
from the parameter posterior density
$p(\theta \vert \mathcal{D}_{1:s})$
of an $\mbox{MLP}(\kappa_{0:\rho})$ model
fitted to a training dataset $\mathcal{D}_{1:s}$,
a blocked Gibbs sampling algorithm
utilizes a partition
$\{\theta_{z(1)},\theta_{z(2)},\ldots,\theta_{z(m)}\}$
of the MLP parameters
$\theta=(\theta_1,\theta_2,\ldots,\theta_n)$.
Due to partitioning
$\{\theta_1,\theta_2\ldots,\theta_n\}$,
the parameter subsets
$\theta_{z(1)},\theta_{z(2)},\ldots,\theta_{z(m)}$
are pairwise disjoint and satisfy
$\displaystyle\cup_{q=1}^{m}\theta_{z(q)}=
\{\theta_1,\theta_2,\ldots,\theta_n\},~m\le n$.
Without loss of generality,
it is assumed that each
subset $\theta_{z(q)}$ of $\theta$
is totally ordered.
For any $(c, q)$ such that $1\le c \le q\le m$,
the shorthand notation
$\theta_{z(c):z(q)}=
(\theta_{z(c)},\theta_{z(c+1)},\ldots,\theta_{z(q)})$
is used hereafter.
So, the vector 
$\theta_{z(1):z(m)}$ is a permutation of $\theta$.

Under such a setup,~\ref{app:bg}
summarizes blocked Gibbs sampling.
At iteration $t$,
for each $q\in\{1,2,\ldots,m\}$,
a blocked Gibbs sampling algorithm
draws a sample
$\theta_{z(q)}^{(t)}$
of parameter group
$\theta_{z(q)}$
from the corresponding conditional density
$p(\theta_{z(q)} \vert
\theta_{z(1):z(q-1)}^{(t)},
\theta_{z(q+1):z(m)}^{(t-1)},
\mathcal{D}_{1:s})$.
To put it another way,
at each iteration,
a sample is drawn from the conditional density of
each parameter group
conditioned on the most recent values
of the other
parameter groups
and on the training dataset.



\section{Methodology}
\label{methodology}





This section introduces a blocked Gibbs sampling algorithm
for MLPs in supervised classification.
MLP parameter blocks are determined by linking
parameters to MLP nodes,
as elaborated in Subsections
~\ref{ss:bgs_via_cs} and~\ref{ss:fnbg}
and as exemplified in Subsections
~\ref{ss:toy_ex} and~\ref{ss:mnist_ex}.

Minibatching and parameter blocking render
the proposed Gibbs sampler possible.
Blocked Gibbs sampling is typically motivated by
increased rates of convergence attained via
near-optimal or optimal parameter groupings.
Although low speed of convergence is
a problem with MCMC in deep learning,
near-zero acceptance rates constitute a
more immediate problem.
In other words,
no mixing is a more pressing issue than
slow mixing.
By updating a small block of parameters at a time
instead of updating all parameters
via a single step,
each block-specific acceptance rate moves away from zero.
So, minibatch blocked Gibbs sampling provides
a workaround for vanishing acceptance rates in deep learning.
Of course there is no free lunch;
increased acceptance rates come at a computational price
per Gibbs step,
which consists of additional conditional density sampling sub-steps.

In typical SG-LD-within-Gibbs and
SG-HMC-within-Gibbs software implementations,
one block of parameters
is formed for each MLP layer
(see Subsection~\ref{subsec:gibbs_sampling}).
A caveat to grouping parameters by MLP layer
is that parameter block sizes
depend on layer widths.
Hence, a parameter block can be large,
containing hundreds or thousands of parameters,
in which case the problem of low acceptance rate
is not resolved.
The blocked Gibbs sampler of this paper
groups parameters by MLP node and allows to further
partition parameters into smaller blocks
within each node,
thus controlling the number of parameters per block.

While structured SG-MCMC~\citep{alexos2022}
also splits the parameter space into blocks,
it uses the parameter blocks to factorize
a variational posterior density. Hence,
structured SG-MCMC aims to solve
the low acceptance and slow mixing problems
by factorizing an approximate parameter posterior density.
The blocked Gibbs sampler herein
factorizes the exact parameter posterior density,
relying on finer parameter grouping.
Minibatching, which is the only type of approximation
employed by the blocked Gibbs sampler of this paper,
is an approximation related to the data,
not to the MLP model.

The finer node-blocked Gibbs sampler
for feedforward neural networks,
as presently conceived here, is
a minibatch MH-within-Gibbs sampler.
The main idea is
to update a relatively small block
of neural network parameters,
thus making it possible to accept states proposed by minibatch MH.
Due to taking minibatch MH sampling steps per block of parameters,
the sampler is gradient-free.
Such a gradient-free approach has been chosen
to cap the computational cost.
Subject to availability of computing resources,
SG-LD or SG-HMC sampling steps
can be taken instead of minibatch MH sampling steps.

\subsection{Metropolis inside blocks}
\label{ss:bgs_via_cs}

Blocked Gibbs sampling
raises the question
how to sample each parameter block
from its conditional density.
Such conditional densities for MLPs
are not available in closed form.
Instead, a single Metropolis-Hastings step
can be taken to draw a sample from
a conditional density.
In this case,
the resulting blocked Gibbs sampling algorithm
is known as Metropolis-within-blocked-Gibbs (MWBG)
sampling.

At iteration $t$ of MWBG,
a candidate state $\theta_{z(q)}^{\star}$
for parameter block $\theta_{z(q)}$
can be sampled from an isotropic normal proposal density
$\mathcal{N}(\theta_{z(q)}^{(t-1)}, \sigma_q^2 I_q)$
centered at state $\theta_{z(q)}^{(t-1)}$ of iteration $t-1$,
where $I_q$ is the
$\vert \theta_{z(q)} \vert \times \vert \theta_{z(q)} \vert$
identity matrix,
$\vert \theta_{z(q)} \vert$
is the number of parameters
in block $\theta_{z(q)}$, and
$\sigma_q^2 > 0$ is the proposal variance
for block $\theta_{z(q)}$.
\tp{
The acceptance probability
$a (\theta_{z(q)}^{\star}, \theta_{z(q)}^{(t-1)})$
of candidate state $\theta_{z(q)}^{\star}$
is given by
\begin{equation}
\label{mwbg_ac_e_eq}
\begin{split}
& a (\theta_{z(q)}^{\star}, \theta_{z(q)}^{(t-1)}) = \\
&
\min{\left\{
\displaystyle
\frac{\pi (\theta_{z(q)}^{\star})  \exp{\left(
		\mathcal{E}(\theta^{(t-1)}, \mathcal{D}_{1:s})
		\right)} }
{\pi (\theta_{z(q)}^{(t-1)})
\exp{\left(
\mathcal{E}(\theta^{\star}, \mathcal{D}_{1:s})
	\right)} },
1\right\}},
\end{split}
\end{equation}
where $\mathcal{E}$ denotes the cross-entropy loss function.
More details for the acceptance probability
$a (\theta_{z(q)}^{\star}, \theta_{z(q)}^{(t-1)})$
are available in~\ref{app:bg}.
}

\begin{algorithm*}[t]
	\caption{Metropolis-within-blocked-Gibbs
		(MWBG) sampling
		based on cross-entropy}
	\label{bmwg_e}
	\begin{algorithmic}[1]
		\State{\textbf{Input}:
			training dataset $\mathcal{D}_{1:s}$
		}
		\State{\textbf{Input}:
			initial state
			$ \theta_{z(1):z(m)}^{(0)}$
		}
		\State{\textbf{Input}:
			proposal variances
			$(\sigma_1^2, \ldots, \sigma_m^2)$
			across blocks
		}
		\State{\textbf{Input}:
			number of Gibbs sampling iterations
			$v$
		}
		
		$~$
		
		\For{$t=1,\ldots,v$
		}
		\For{$q=1,\ldots,m$
		}
		\State{Draw
			$\theta_{z(q)}^{\star}\sim
			\mathcal{N}(\theta_{z(q)}^{(t-1)}, \sigma_q^2 I_{q})
			$
		}
		\State{Compute
			$a (\theta_{z(q)}^{\star}, \theta_{z(q)}^{(t-1)}) =
\min{\left\{
	\displaystyle
	\frac{\pi (\theta_{z(q)}^{\star})  \exp{\left(
			\mathcal{E}(\theta^{(t-1)}, \mathcal{D}_{1:s})
			\right)} }
	{\pi (\theta_{z(q)}^{(t-1)})
		\exp{\left(
			\mathcal{E}(\theta^{\star}, \mathcal{D}_{1:s})
			\right)} },
	1\right\}}
			$
		}
		\State{Draw $u\sim\mathcal{U}(0,1)$}
		\If{$u \le a (\theta_{z(q)}^{\star}, \theta_{z(q)}^{(t-1)})$}
		\State{Set $\theta_{z(q)}^{(t)} = \theta_{z(q)}^{\star}$}
		\Else{}
		\State{Set $\theta_{z(q)}^{(t)} = \theta_{z(q)}^{(t-1)}$}
		\EndIf
		\EndFor
		\EndFor
	\end{algorithmic}
\end{algorithm*}

Algorithm~\ref{bmwg_e} summarizes
exact MWBG sampling.
To make Algorithm~\ref{bmwg_e}
amenable to big data,
minibatching can be used
by replacing all instances of $\mathcal{D}_{1:s}$
\tp{with batches (strict subsets of $\mathcal{D}_{1:s}$);
the resulting approximate MCMC algorithm
is termed `minibatch MWBG sampling'.}

\subsection{Finer blocks}
\label{ss:fnbg}



Big data and big models challenge
the adaptation of MCMC sampling methods in deep learning.
Minibatching provides a way of applying MCMC to big data.
It is less clear how to apply MCMC
to big neural network models,
containing thousands or millions of parameters.
Minibatch MWBG sampling proposes a way forward
by drawing an analogy between subsetting data
and subsetting model parameters.
As data batches reduce the dimensionality of
data per Gibbs sampling iteration,
parameter blocks
reduce the dimensionality of
parameters per Metropolis-within-Gibbs update.

In an $\mbox{MLP}(\kappa_{0:\rho})$ with $n$ parameters,
layer $j$ contains $\kappa_j (\kappa_{j-1}+1)$ parameters,
of which $\kappa_j \kappa_{j-1}$ are weights and
$\kappa_j$ are biases.
So, if parameters are grouped by layer,
then the block of layer $j$ contains
$\kappa_j (\kappa_{j-1}+1)$ parameters.
The number of parameters in the block of layer $j$
grows linearly with the number $\kappa_j$ of nodes in layer $j$
as well as linearly with the number $\kappa_{j-1}$
of nodes in layer $j-1$.

If parameters are grouped by node,
then each node block in layer $j$ contains
$\kappa_{j-1}+1$,
of which $\kappa_{j-1}$ are weights and one is bias.
The number of parameters in a node block in layer $j$
does not depend on the number $\kappa_j$ of nodes in layer $j$,
but it grows linearly with the number
$\kappa_{j-1}$ of nodes in layer $j-1$.
MWBG sampling (Algorithm~\ref{bmwg_e})
based on parameter grouping by MLP node is
termed `(Metropolis-within-)node-blocked-Gibbs
(NBG) sampling'.

Finer parameter blocks of smaller size
can be generated by splitting the 
$\kappa_{j-1}+1$ parameters of a node in layer $j$
into $\beta_j$ subgroups.
In this case, each finer parameter block
in each node in layer $j$ contains
$(\kappa_{j-1}+1)/\beta_j$ parameters.
If hyperparameter $\beta_j$ is chosen to be a
linear function of $\kappa_{j-1}$,
then the number of parameters
per finer block per node in layer $j$
depends neither on the number $\kappa_j$
of nodes in layer $j$
nor on the number $\kappa_{j-1}$
of nodes in layer $j-1$.
MWBG sampling (Algorithm~\ref{bmwg_e})
based on finer parameter grouping per node is
termed `(Metropolis-within-)finer-node-blocked-Gibbs
(FNBG) sampling'.

\tp{Parameter blocks of smaller size
increase both the acceptance rate per block and
the computational complexity of FNBG sampling.
Thus, the number of parameters per block regulates
the trade-off between acceptance rates and computational complexity.
As a practical guideline, the number of parameters per block
can be tuned by reducing it incrementally until non-vanishing
acceptance rates are attained in order to make sampling possible.
The question of optimal parameter block size for sampling
is analogous to the question of optimal learning rate for stochastic optimization.
Both of these questions pose hyperparameter optimization problems,
which can be approached primarily from an engineering perspective
in lieu of theoretical solutions.}

\subsection{Finer blocks: toy example}
\label{ss:toy_ex}


The $\mbox{MLP}(3,2,2,2)$ architecture
shown in Figure~\ref{mlp_example}
provides a toy example that showcases
layer-based, node-based and finer node-based parameter grouping
(more briefly termed
`layer-blocking', `node-blocking' and `finer node-blocking').
It is reminded that finer node-based grouping
refers to parameter grouping into smaller blocks within each node.
Figure~\ref{fnbg} shows
the directed acyclic graph (DAG)
representation of $\mbox{MLP}(3,2,2,2)$,
augmenting Figure~\ref{mlp_example}
with parameter annotations
and with a layer consisting of a single node
that represents label $y_i$.
Yellow shapes indicate parameters;
yellow circles and boxes
correspond to biases and weights.
Yellow boxes adhere to
expository visual conventions of plate models,
with each box representing a set of weights.
Purple nodes indicate observed variables (input and output data),
whereas blue and gray nodes indicate latent variables
(post-activations).

\begin{figure*}[t]
	\centering
	\includegraphics[width=0.7\linewidth]{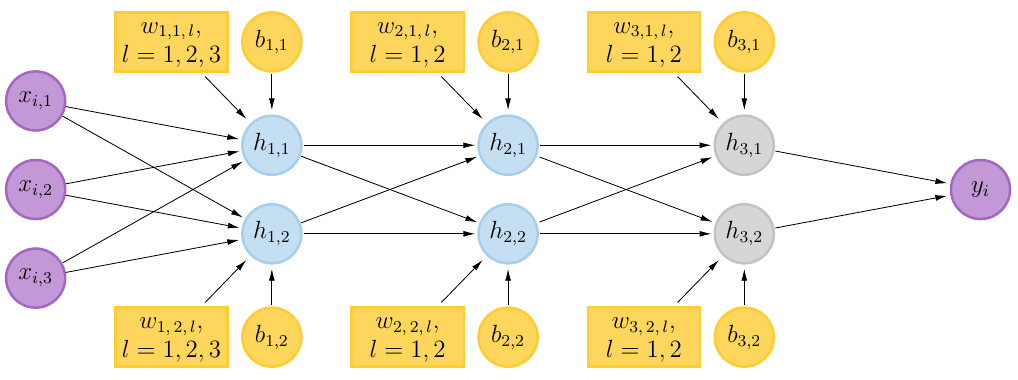}
	\caption{Visual demonstration of
		node-based parameter blocking
		for the $\mbox{MLP}(3,2,2,2)$ architecture.
		The MLP is expressed as a DAG.
		Yellow nodes and yellow plates
		correspond to biases and weights.
		Each of the blue hidden layer nodes and of
		the gray output layer nodes is assigned
		a parameter block of yellow parent nodes in the DAG.}
	\label{fnbg}
\end{figure*}


Layer-blocking partitions 
the set of $20$ parameters of $\mbox{MLP}(3,2,2,2)$
to three blocks
$\theta_{z(1)},~\theta_{z(2)},~\theta_{z(3)},$
which contain
$\vert\theta_{z(1)}\vert=8,~
\vert\theta_{z(2)}\vert=6,~
\vert\theta_{z(3)}\vert=6$
parameters.
For instance, the first hidden layer induces block
$\theta_{z(1)}=(w_{1,1,1:3}, b_{1,1}, w_{1,2,1:3}, b_{1,2})$,
where
$w_{j,k,1:l}=(w_{j,k,1},w_{j,k,2},\ldots,w_{j,k,l})$.

Node-blocking partitions
the set of $20$ parameters of $\mbox{MLP}(3,2,2,2)$
to six blocks,
as many as the number of hidden and output layer nodes.
Each blue or gray node
in a hidden layer or in the output layer
has its own distinct set of yellow weight and bias parents.
Parameters are grouped according to shared parenthood.
For instance, the parameters of block
$\theta_{z(1)}=(w_{1,1,1:3}, b_{1,1})$,
have node $h_{1,1}$ as a common child.

Acceptance probabilities for parameter blocks
require likelihood function evaluations.
It is not possible to factorize conditional densities
to achieve more computationally efficient block updates.
For instance, as it can be seen in Figure~\ref{fnbg},
changes in block $\theta_{z(1)}=(w_{1,1,1:3}, b_{1,1})$
induced by node $h_{1,1}$ in layer $1$
propagate through subsequent layers
due to the hierarchical MLP structure,
thus prohibiting a factorization of conditional density
$p(\theta_{z(1)} \vert \theta_{z(2):z(6)}, \mathcal{D}_{1:s})$.
More formally, each pair of node-based parameter blocks
forms a v-structure, having label $y_i$ (purple node)
as a descendant. Since training label $y_i$ is observed,
such v-structures are activated,
and therefore any two node-based parameter blocks
are not conditionally independent given label $y_i$.

As a demonstration of finer node-blocking
for $\mbox{MLP}(3,2,2,2)$,
set $\beta_{1}=2$ in layer $1$.
For $\beta_{1}=2$, blocks
$\theta_{z(1)}=w_{1,1,1:2}$ and
$\theta_{z(2)}=(w_{1,1,3}, b_{1,1})$
are generated within node $h_{1,1}$.
Similarly, blocks 
$\theta_{z(3)}=w_{1,2,1:2}$ and
$\theta_{z(4)}=(w_{1,2,3}, b_{1,2})$
are generated within node $h_{1,2}$.

To recap on this toy example,
layer-based grouping produces
a single block of eight parameters in layer $1$,
node-based grouping produces
two blocks of four parameters each in layer $1$,
and a case of finer node-based grouping produces
four blocks of two parameters each in layer $1$.
It is thus illustrated that finer blocks per node
provide a way to reduce the number of parameters
per Gibbs sampling block.





\subsection{Finer blocks: MNIST example}
\label{ss:mnist_ex}

After having used $\mbox{MLP}(3,2,2,2)$ as a toy example
to describe the basics of finer node-blocking,
the wider $\mbox{MLP}(784,10,10,10,10)$ architecture
is utilized to elaborate on the practical relevance
of smaller blocks per node.
An $\mbox{MLP}(784,10,10,10,10)$ is fitted
to the MNIST (and FMNIST) training dataset
in Section~\ref{experiments}.
An $\mbox{MLP}(784,10,10,10,10)$
contains $8180$ parameters,
of which $7850,~110,~110$ and $110$
have children nodes
in the first, second, third hidden layer
and output layer, respectively.

So, layer-blocking for $\mbox{MLP}(784,10,10,10,10)$
involves four parameter blocks
$\theta_{z(1)},~\theta_{z(2)},~\theta_{z(3)},~\theta_{z(4)}$
of sizes
$\vert\theta_{z(1)}\vert=7850,~
\vert\theta_{z(2)}\vert=
\vert\theta_{z(3)}\vert=
\vert\theta_{z(4)}\vert=110$.
Metropolis-within-Gibbs updates
for block $\theta_{z(1)}$
have zero or near-zero acceptance rate
due to the large block size of $\vert\theta_{z(1)}\vert=7850$.
Although each of blocks
$\theta_{z(2)},~\theta_{z(3)},~\theta_{z(4)}$
has nearly two orders of magnitude smaller size than $\theta_{z(1)}$,
a block size of
$\vert\theta_{z(2)}\vert=
\vert\theta_{z(3)}\vert=
\vert\theta_{z(4)}\vert=110$
might be large enough to yield Metropolis-within-Gibbs updates
with prohibitively low acceptance rate.

Node-blocking for $\mbox{MLP}(784,10,10,10,10)$
entails a block of $785$ parameters for each node
in the first hidden layer,
and a block of $11$ parameters for each node
in the second and third hidden layer
and in the output layer.
Thus, node-blocking addresses
the low acceptance rate problem
related to large parameter blocks
for block updates in all layers
apart from the first hidden layer.

There is no practical need to carry out finer node-blocking
in nodes belonging to the second or third hidden layer
or to the output layer of $\mbox{MLP}(784,10,10,10,10)$,
since each block in these layers
contains only $11$ parameters based on node-blocking.
On the other hand, finer node-blocking is useful
in nodes belonging to the first hidden layer,
since each block related to such nodes contains
a large number of $785$ parameters.
By setting $\beta_1=10$,
smaller blocks (each consisting of $78$ or $79$ parameters)
are generated in the first hidden layer.
So, finer node-blocking disentangles
block sizes in the first hidden layer
from input data dimensions,
making it possible to decrease block sizes
and to consequently increase acceptance rates.

\section{Experiments}
\label{experiments}

Minibatch FNBG sampling is put into practice to make empirical observations
about several characteristics
of approximate MCMC
in deep learning.
In the experiments of this section,
parameters of MLPs are sampled.
Three datasets are used, namely
a simulated noisy version of exclusive-or~\citep{papamarkou2022},
MNIST~\citep{lecun1998} and
fashion MNIST~\citep{xiao2017}.
For brevity,
exclusive-or and fashion MNIST
are abbreviated to XOR and FMNIST.
Table~\ref{data_models_table}
displays the correspondence between used datasets and fitted MLPs.

\begin{table*}[t]
	\centering
	\caption{Datasets used in the experiments
		and MLPs fitted to these datasets.
		Training and test dataset sample sizes
		as well as MLP parameter dimensions
		are shown.}
	\label{data_models_table}
	\begin{tabular}{lrrlr}
		\hline
		\multicolumn{3}{c}{Dataset} & \multicolumn{2}{c}{Neural network}\\ \hline
		\multicolumn{1}{c}{\multirow{2}{*}{Name}} &
		\multicolumn{2}{c}{Sample size} & 
		\multicolumn{1}{c}{\multirow{2}{*}{Architecture}} &
		\multicolumn{1}{c}{\multirow{2}{*}{\# parameters}}\\
		\cline{2-3}
		&
		\multicolumn{1}{c}{Training} &
		\multicolumn{1}{c}{Test}
		&
		&
		\\
		\hline
		Noisy XOR &  $5000$ &  $1200$ & $\mbox{MLP}(2,2,1)$           &     $9$ \\ \hline
		Noisy XOR &  $5000$ &  $1200$ & $\mbox{MLP}(2,2,2,2,2,2,2,1)$ &    $39$ \\ \hline
		MNIST     & $60000$ & $10000$ & $\mbox{MLP}(784,10,10,10,10)$ & $8180$ \\ \hline
		FMNIST    & $60000$ & $10000$ & $\mbox{MLP}(784,10,10,10,10)$ & $8180$ \\ \hline
	\end{tabular}
\end{table*}


The noisy XOR training and test datasets
are visualized
in Figure~\ref{noisy_xor_scatter}
of~\ref{app:noisy_xor}.
Random perturbations
of $(0, 0)$ and of $(1, 1)$,
corresponding to gray and yellow points,
are mapped to $0$ (circles).
Moreover, random perturbations
of $(0, 1)$ and of $(1, 0)$,
corresponding to purple and blue points,
are mapped to $1$ (triangles).
More information about
the simulation of noisy XOR
can be found in~\cite{papamarkou2022}.

Each MNIST and FMNIST image is firstly reshaped,
by converting it from a $28\times 28$ matrix
to a vector of length $784 = 28\times 28$,
and it is subsequently standardized.
This image reshaping explains why
the $\mbox{MLP}(784, 10, 10, 10, 10)$ model,
which is fitted to MNIST and FMNIST,
has an input layer width of $784$.

\subsection{Experimental configuration}

Binary classification for noisy XOR
is performed via the likelihood function
based on binary cross-entropy,
as described in~\cite{papamarkou2022}.
Multiclass classification for MNIST and FMNIST
is performed via the likelihood function
given by Equation~\eqref{mc_mlp_lik},
which is based on cross-entropy.

The sigmoid activation function is applied at each hidden layer
of each MLP of Table~\ref{data_models_table}.
Furthermore, the
sigmoid activation function is also applied
at the output layer of
$\mbox{MLP}(2, 2, 1)$ and
of $\mbox{MLP}(2, 2, 2, 2, 2, 2, 2, 1)$,
conforming to the employed likelihood function
for binary classification.
The softmax activation function is applied
at the output layer of
$\mbox{MLP}(784, 10, 10, 10, 10)$,
in accordance with likelihood function
\eqref{mc_mlp_lik}
for multiclass classification.
The same $\mbox{MLP}(784, 10, 10, 10, 10)$ model
is fitted to the MNIST and FMNIST datasets.

A normal prior
$\pi(\theta)\sim\mathcal{N}(0, 10 I)$ is adopted
for the parameters $\theta\in\mathbb{R}^n$
of each MLP model shown in
Table \ref{data_models_table}.
\tp{Thus, a relatively high variance (equal to $10$)
is assigned a priori to each parameter.}

NBG sampling is run upon fitting
$\mbox{MLP}(2, 2, 1)$ and
$\mbox{MLP}(2, 2, 2, 2, 2, 2, 2, 1)$
to the noisy XOR training set,
while FNBG sampling is run upon fitting
$\mbox{MLP}(784, 10, 10, 10, 10)$
to the MNIST and FMNIST training sets.
So, parameters are grouped by node in
$\mbox{MLP}(2, 2, 1)$ and
$\mbox{MLP}(2, 2, 2, 2, 2, 2, 2, 1)$,
whereas multiple parameter groups per node
are formed in the first hidden layer of
$\mbox{MLP}(784, 10, 10, 10, 10)$
as elaborated in Subsection~\ref{ss:mnist_ex}.
Parameters are grouped by node from
the second hidden layer onwards in
$\mbox{MLP}(784, 10, 10, 10, 10)$.
All three MLPs of Table~\ref{data_models_table}
are relatively shallow neural networks.
However, $\mbox{MLP}(784, 10, 10, 10, 10)$
has two orders of magnitude larger input layer width
in comparison to
$\mbox{MLP}(2, 2, 1)$ and
$\mbox{MLP}(2, 2, 2, 2, 2, 2, 2, 1)$.
So, the higher dimension
of MNIST and FMNIST input data necessitates
finer node-blocking in the first hidden layer of
$\mbox{MLP}(784, 10, 10, 10, 10)$.
On the other hand, the smaller dimension
of noisy XOR input data
implies that finer blocks per node
are not required in the first hidden layer of
$\mbox{MLP}(2, 2, 1)$ or of
$\mbox{MLP}(2, 2, 2, 2, 2, 2, 2, 1)$.

A normal proposal density is chosen for each parameter block.
The variance of each proposal density is a hyperparameter,
thus enabling to tune the magnitude of proposal steps
separately for each parameter block.
\tp{Preliminary FNBG pilot runs have been carried out in order
to tune the proposal variances. During this pre-training stage,
the proposal variances have been set initially
to a single relatively high value
across all parameter blocks.
Subsequently, the proposal variances of blocks in each hidden layer
have been reduced to smaller values in deeper layers
until non-vanishing acceptance rates have been attained.}

$m=10$ Markov chains are realized for noisy XOR,
whereas $m=1$ chain is realized for each of MNIST and FMNIST
due to computational resource limitations.
$110000$ iterations are run per chain realization,
$10000$ of which are discarded as burn-in.
Thereby, $v=100000$ post-burnin iterations
are retained per chain realization.
Acceptance rates are computed from all
$100000$ post-burnin iterations per chain.

Monte Carlo approximations of posterior predictive pmfs
are computed
according to Equation~\eqref{pred_posterior_approx}
for each data point of each test set.
To reduce the computational cost,
the last $v=10000$ iterations of each realized chain
are used in
Equation~\eqref{pred_posterior_approx}.

Predictions for noisy XOR
are made using the binary classification rule
mentioned in~\cite{papamarkou2022}.
Predictions for MNIST and for FMNIST
are made using the multiclass classification rule
specified by Equation~\eqref{multi_class_pred}.
Given a single chain realization based on a training set,
predictions are made for every point
in the corresponding test set;
the predictive accuracy is then computed as
the number of correct predictions
over the total number of points in the test set.
For the noisy XOR test set,
the mean of predictive accuracies
across the $m=10$ realized chains
is reported.
For the MNIST and FMNIST test sets,
the predictive accuracy
based on the
corresponding single chain realization ($m=1$)
is reported.

\subsection{Exact vs approximate MCMC}

An illustrative comparison
between approximate and exact NBG sampling is made
in terms of acceptance rate, predictive accuracy and runtime.
The comparison between approximate and exact NBG sampling
is carried out in the context of noisy XOR only,
since exact MCMC is not feasible for the MNIST and FMNIST examples
due to vanishing acceptance rates and high computational requirements.

$\mbox{MLP}(2, 2, 2, 2, 2, 2, 2, 1)$
is fitted to the noisy XOR training set
under four scenarios.
For scenario $1$,
approximate NBG sampling is run with a batch size of $100$
to simulate
$m=10$ chains.
For scenario $2$,
\tp{exact NBG is run
to simulate
$10$ chains}.
For scenario $3$,
exact NBG is run until $10$ chains are obtained,
each having an acceptance rate $\ge 5\%$.
For scenario $4$,
exact NBG is run until $10$ chains are acquired,
each with an acceptance rate $\ge 20\%$.
$11$ and $23$ chains have been run in total
under scenarios $3$ and $4$, respectively,
to get $10$ chains
that satisfy the acceptance rate lower bounds
in each scenario.

\tp{It is not suggested to develop a sampling algorithm
that relies on some acceptance rate threshold as a criterion for chain retention,
since such a criterion would introduce bias
in the estimation of the target parameter posterior density.
The purpose of this experiment is to showcase that the avoidance
of prohibitively low acceptance rates enables
the generation of chains with predictive capacity.}

For approximate NBG sampling (scenario $1$),
the proposal variance is set to $0.04$.
For the three exact NBG sampling scenarios,
the proposal variance is lowered to $0.001$
in order to mitigate decreased acceptance rates
in the presence of increased sample size
($5000$ training data points)
relatively to the batch size of $100$
used in approximate sampling.

\begin{figure*}[t]
	\begin{subfigure}{1\textwidth}
		\centering
		\includegraphics[width=1\linewidth]{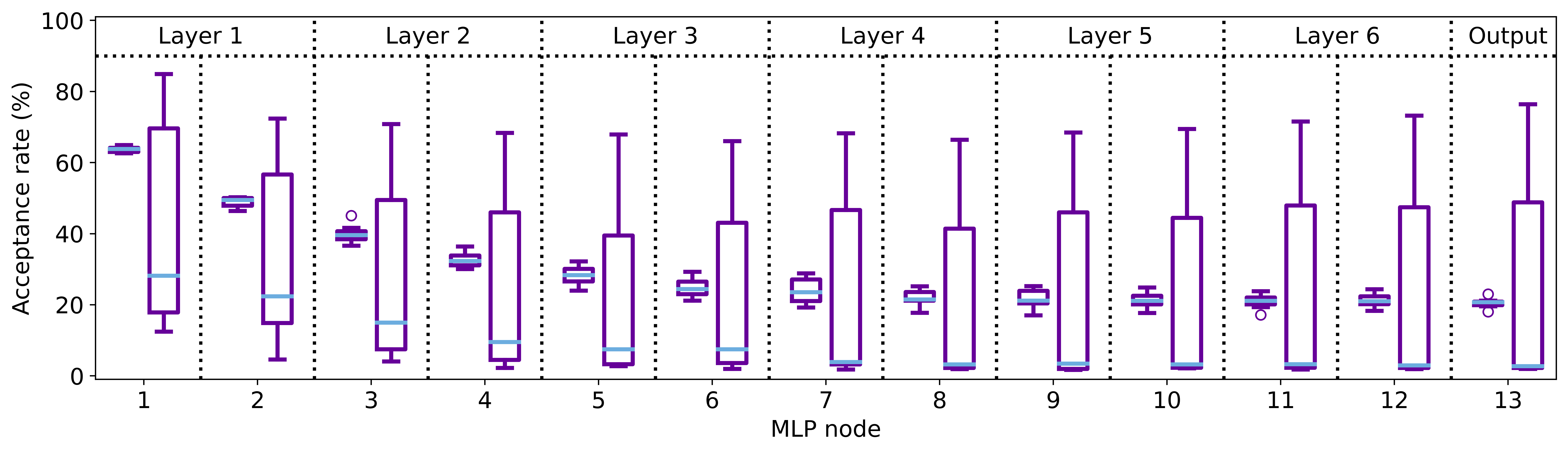}
		\caption{Acceptance rate boxplots.
			The left and right boxplot in each pair correspond to approximate and exact NBG.}
		\label{exact_vs_approx_acc_rate_boxplots}
	\end{subfigure}\\
	\begin{subfigure}{.491\textwidth}
		\centering
		\includegraphics[width=1\linewidth]{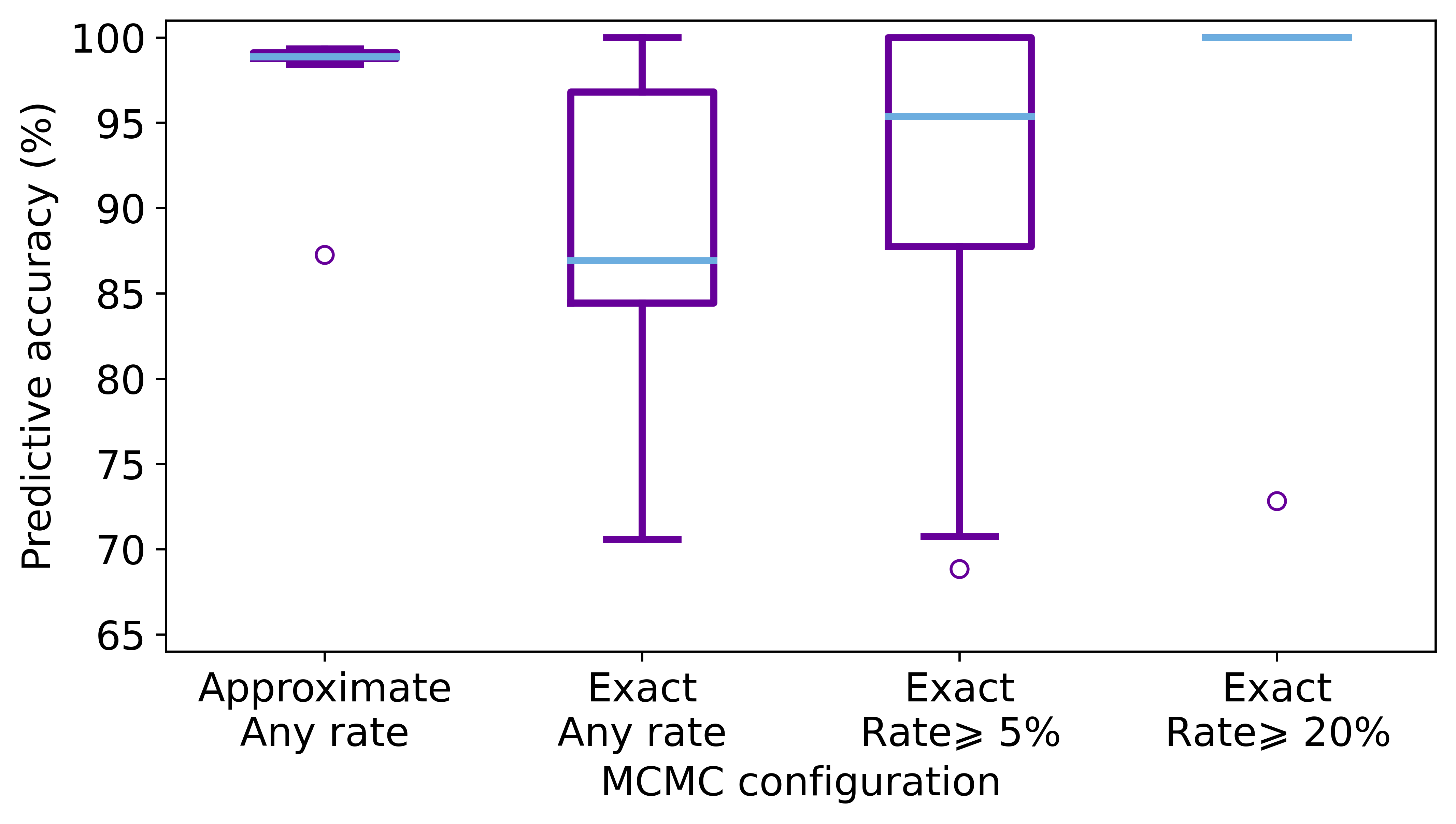}
		\caption{Predictive accuracy boxplots.}
		\label{exact_vs_approx_pred_acc_boxplots}
	\end{subfigure}
	\begin{subfigure}{.491\textwidth}
		\centering
		\includegraphics[width=1\linewidth]{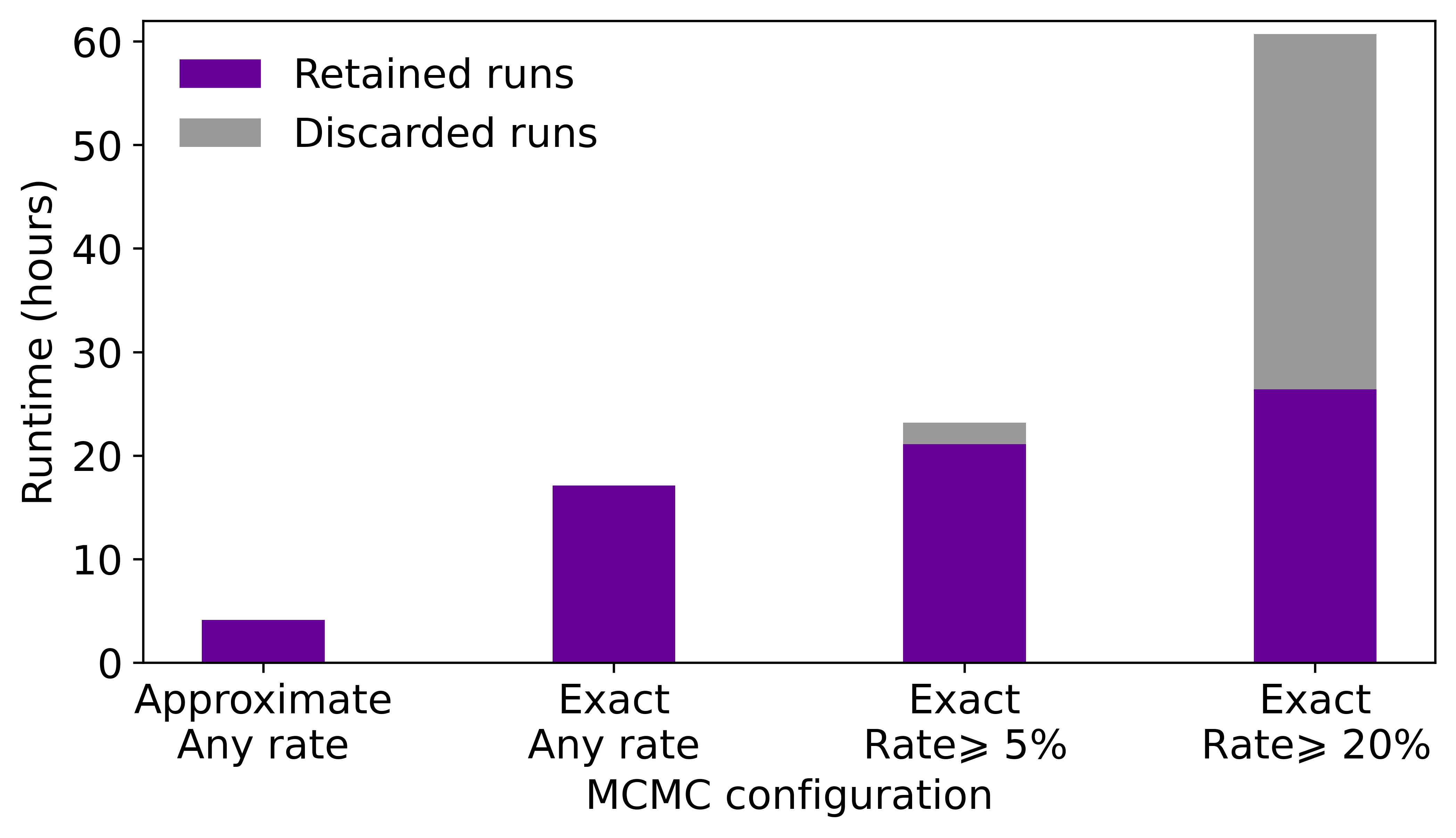}
		\caption{Runtime barplot.}
		\label{exact_vs_approx_time_barplots}
	\end{subfigure}
	\caption{A comparison between approximate and exact NBG sampling.
		$\mbox{MLP}(2, 2, 2, 2, 2, 2, 2, 1)$ is fitted
		to noisy XOR under four scenarios,
		acquiring $10$ chains per scenario.
		Scenario $1$: approximate NBG with a batch size of $100$.
		Scenario $2$: exact NBG.
		Scenario $3$: exact NBG with acceptance rate $\ge 5\%$.
		Scenario $4$: exact NBG with acceptance rate $\ge 20\%$.
		Chains with acceptance rates below $5\%$ in scenario $3$
		and below $20\%$ in scenario $4$
		are discarded until $10$ chains are attained in each case.
	}
	\label{exact_vs_approx_plots}
\end{figure*}

Figure~\ref{exact_vs_approx_acc_rate_boxplots}
displays boxplots of node-specific acceptance rates
for approximate and exact NBG sampling
without lower bound conditions on acceptance rates
(scenarios $1$ and $2$).
A pair of boxplots is shown for each of the $13$ nodes
in the six hidden layers and one output layer of
$\mbox{MLP}(2, 2, 2, 2, 2, 2, 2, 1)$.
The left and right boxplots per pair
correspond to approximate and exact NBG sampling.
Blue lines represent medians.

Three empirical observations are drawn from
Figure~\ref{exact_vs_approx_acc_rate_boxplots}.
First of all,
approximate NBG
attains higher acceptance rates
than exact NBG
according to the (blue) medians,
despite setting higher proposal variance
in the former in comparison to the latter
($0.04$ and $0.001$, respectively).
Secondly,
approximate NBG attains less volatile acceptance rates
than exact NBG
as seen from the boxplot interquartile ranges.
Acceptance rates for exact NBG range
from near $0\%$ to about $50\%$
as neural network depth increases,
exhibiting lack of stability
due to entrapment in local modes
in some chain realizations.
Thirdly, acceptance rates decrease
as depth increases.
For instance,
exact NBG yields median acceptance rates of
$63.83\%$ and $20.72\%$
in nodes $1$ and $13$, respectively.
The attenuation of acceptance rate with depth
is further discussed in Subsection~\ref{depth_and_acceptance}.

Figure~\ref{exact_vs_approx_pred_acc_boxplots}
shows boxplots of predictive accuracies
for the four scenarios under consideration.
Approximate NBG has a median predictive accuracy of $98.88\%$,
with interquartile range concentrated around the median
and with a single outlier ($87.25\%$) in $10$ chain realizations.
Exact NBG without conditions on acceptance rate
and exact NBG conditioned on acceptance rate $\ge 5\%$
have lower median predictive accuracies
($86.92\%$ and $95.38\%$)
and higher interquartile ranges than exact NBG.
Exact NBG conditioned on acceptance rate $\ge 20\%$
attains a median predictive accuracy of $100\%$;
nine out of $10$ chain realizations yield $100\%$ accuracy,
and one chain gives an outlier accuracy of $72.83\%$.
The overall conclusion is that approximate NBG
retains a predictive advantage over exact NBG,
since minibatch sampling ensures consistency
in terms of high predictive accuracy and
reduced predictive variability.
Exact NBG conditioned on higher acceptance rates
can yield near-perfect predictive accuracy
in the low parameter and data dimensions
of the toy noisy XOR example,
but stability and computational issues arise,
as many chains with near-zero acceptance rates are discarded
before $10$ chains
with the required level of acceptance rate ($\ge 20\%$)
are obtained.





Figure~\ref{exact_vs_approx_time_barplots} shows a
barplot of runtimes (in hours)
for the four scenarios under consideration.
Purple bars represent runtimes
for the $10$ retained chains per scenario,
whereas gray bars indicate runtimes
for the chains that have been discarded
due to unmet acceptance rate requirements.
As seen from a comparison between purple bars,
approximate NBG has shorter runtime
(for retained chains of same length) than exact NBG,
which is explained by the fact that
minibatching uses a subset of the training set
at each approximate NBG iteration.
A comparison between gray bars in scenarios $3$ and $4$
demonstrates that exact NBG runtimes for discarded chains
increase with increasing acceptance rate lower bounds.
By observing Figures~\ref{exact_vs_approx_pred_acc_boxplots}
and~\ref{exact_vs_approx_time_barplots} jointly,
it is pointed out that
predictive accuracy improvements of exact NBG
(arising from higher acceptance rate lower bounds)
come at higher computational costs.

\begin{obs}
Exact MCMC algorithms
based on the Metropolis-Hastings acceptance mechanism
are not feasible
for feedforward neural networks
due to vanishing acceptance rates and
high computational cost.
Splitting the parameter space into smaller blocks
recovers higher acceptance rates,
and minibatch MCMC sampling reduces
the computational cost per sampling step.
With relatively small penalty in predictive accuracy,
minibatch blocked Gibbs sampling makes it possible
to traverse the parameter space with reduced computational cost.
Being able to shift from no mixing of exact MCMC
to slow mixing of approximate MCMC
yields gains in predictive accuracy.
\end{obs}

\subsection{Depth and acceptance rate}
\label{depth_and_acceptance}

Figure~\ref{acc_rate_and_depth} displays
mean acceptance rates
across $m=10$ chains realized via minibatch NBG
upon fitting $\mbox{MLP}(2, 2, 2, 2, 2, 2, 2, 1)$ to noisy XOR.
In particular,
Figure~\ref{acc_rate_per_node}
shows the mean acceptance rate for each node
in the six hidden layers and one output layer of
$\mbox{MLP}(2, 2, 2, 2, 2, 2, 2, 1)$,
while Figure~\ref{acc_rate_per_layer}
shows the mean acceptance rate for each
of these seven (six hidden and one output) layers.
A batch size of $100$ is used for minibatch NBG.
The same set of $10$ chains have been used
in Figures~\ref{exact_vs_approx_plots}
and~\ref{acc_rate_and_depth}.

\begin{figure}[t]
	\begin{subfigure}{.491\textwidth}
		\centering
		\includegraphics[width=1\linewidth]{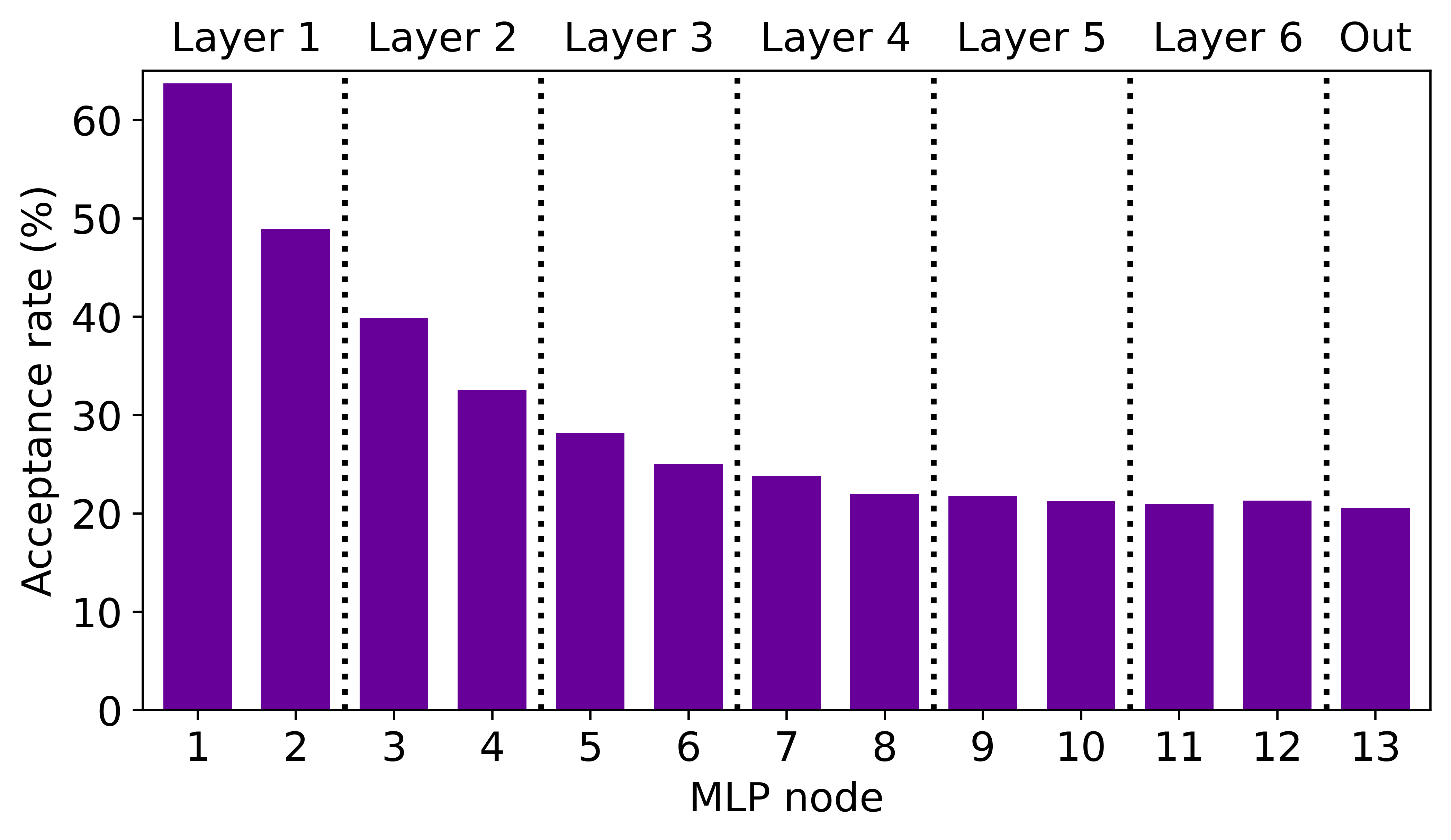}
		\caption{Mean acceptance rate per node.}
		\label{acc_rate_per_node}
	\end{subfigure}\\
	\begin{subfigure}{.491\textwidth}
		\centering
		\includegraphics[width=1\linewidth]{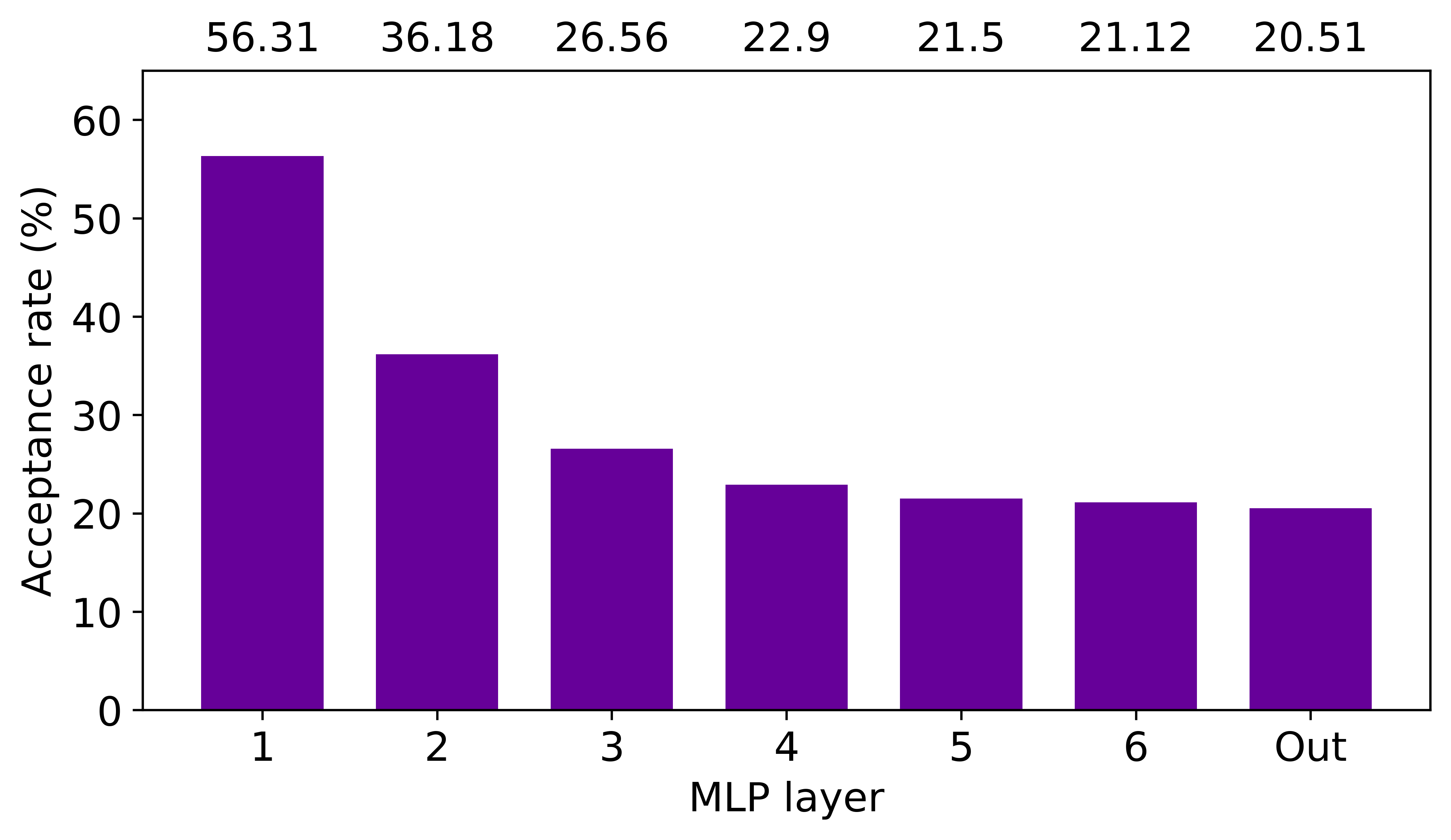}
		\caption{Mean acceptance rate per layer.}
		\label{acc_rate_per_layer}
	\end{subfigure}
	\caption{Mean acceptance rates (per node and per layer)
		across $10$ chains realized via minibatch NBG sampling
		of the $\mbox{MLP}(2, 2, 2, 2, 2, 2, 2, 1)$ parameters.
		The MLP is fitted to noisy XOR.
		A batch size of $100$ is used.}
	\label{acc_rate_and_depth}
\end{figure}

Figures~\ref{exact_vs_approx_acc_rate_boxplots}
and~\ref{acc_rate_per_node}
provide alternative views of node-specific acceptance rates.
The former figure represents such information via boxplots and medians,
whereas the latter makes use of a barplot of associated means.

Figure~\ref{acc_rate_and_depth}
demonstrates that
if the proposal variance is the same
for all parameter blocks across layers,
then the acceptance rate reduces with depth.
For instance,
it can be seen in Figure~\ref{acc_rate_per_layer}
that the acceptance rates
for hidden layers $1$, $2$ and $3$ are
$56.31\%$, $36.18\%$ and $26.56\%$, respectively.

Using a common proposal variance
for all parameter blocks across layers
generates disparities in acceptance rates,
with higher rates in shallower layers
and lower rates in deeper layers.
These disparities become more pronounced
with big data or with high parameter dimensions.
For example, sampling
$\mbox{MLP}(784, 10, 10, 10, 10)$
parameters
with the same proposal variance
in all parameter blocks
is not feasible
in the case of MNIST or FMNIST;
the acceptance rates are high in the first hidden layer
and drop near zero in the output layer.
FNBG sampling enables to
reduce the proposal variance for deeper layers,
thus avoiding vanishing acceptance rates
with increasing depth.

Tables~\ref{mnist_tuning_layers} and~\ref{fmnist_tuning_layers}
of~\ref{app:tuning} exemplify
empirically tuned proposal variances
for minibatch FNBG sampling of
$\mbox{MLP}(784, 10, 10, 10, 10)$
parameters
in the respective cases of MNIST and FMNIST.
Batch sizes of
$600,~1800,~3000$ and $4200$
are employed,
corresponding to
$1\%,~3\%,~5\%$ and $7\%$ of the
MNIST and FMNIST training sets.
For each of these four batch sizes
and for each training set,
the proposal variance per layer
\tp{is reduced during pre-training
until the acceptance rate of the layer is not prohibitively low,
and subsequently
the proposal variance tuned via pre-training
is used for computing
the acceptance rate of the corresponding layer from a chain realization.}
Tables~\ref{mnist_tuning_layers} and~\ref{fmnist_tuning_layers}
demonstrate that
if proposal variances are reduced in deeper layers,
then acceptance rates do not vanish with depth.
For increasing batch size,
acceptance rates drop across all layers,
as expected when shifting from approximate towards exact MCMC.

As part of Table~\ref{mnist_tuning_layers},
a chain is simulated
upon fitting
$\mbox{MLP}(784, 10, 10, 10, 10)$
to the MNIST training set
via minibatch FNBG sampling
with a batch size of $3000$.
Figure~\ref{traceplots},
which comprises a grid of $4\times 2=8$ traceplots,
is produced from that chain.
Each row of Figure~\ref{traceplots}
is related to one of the $8180$ parameters of
$\mbox{MLP}(784, 10, 10, 10, 10)$.
More specifically,
the first, second, third and fourth row
correspond to
parameter $\theta_{1005}$ in hidden layer $1$,
parameter $\theta_{7872}$ in hidden layer $2$,
parameter $\theta_{8008}$ in hidden layer $3$ and
parameter $\theta_{8107}$ in the output layer.
A pair of traceplots per parameter is shown in each row;
the right traceplot is more zoomed out
than the left one.
All traceplots in the right column
share a common range of $[-8, 8]$ in their vertical axes.

\begin{figure*}[t]
	\centering
	\includegraphics[width=1\linewidth]{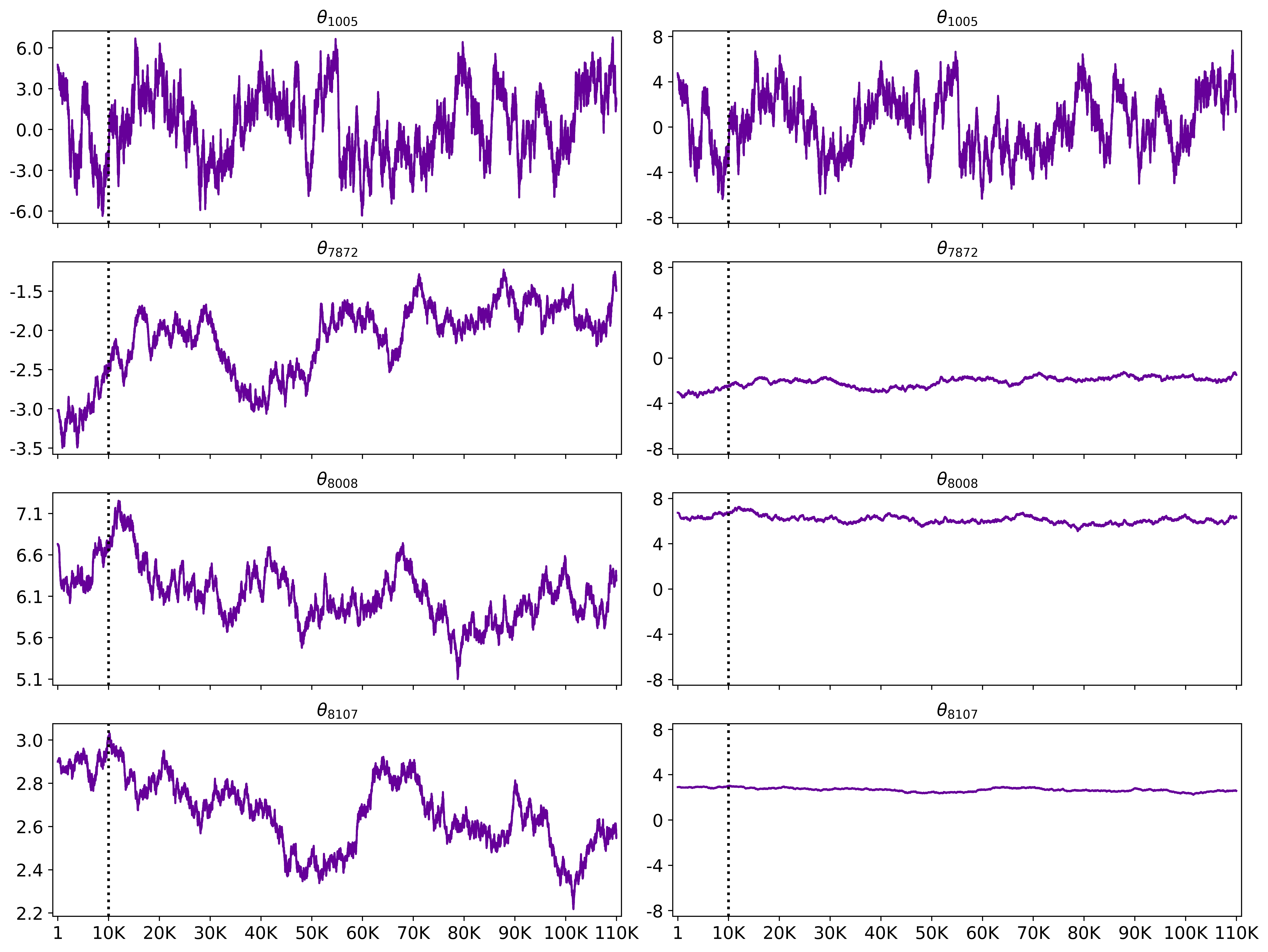}
	\caption{Markov chain traceplots of
		four parameter coordinates
		of $\mbox{MLP}(784, 10, 10, 10, 10)$, which is fitted to MNIST
		via minibatch FNBG sampling with a batch size of $3000$.
		Each row displays two traceplots of the same chain for a single parameter;
		the traceplot on the right is more zoomed-out than the one on the left.
		The traceplots of the right column share a common range on the vertical axes.
		Vertical dotted lines indicate the end of burnin.}
	\label{traceplots}
\end{figure*}

It is observed that the zoomed-in traceplots
(left column of Figure~\ref{traceplots})
do not exhibit entrapment in local modes
irrespective of network depth,
agreeing with the non-vanishing acceptance rates
of Table~\ref{mnist_tuning_layers}.
Furthermore, it is seen from the zoomed-out traceplots
(right column of Figure~\ref{traceplots})
that chain scales decrease in deeper layers.
For example,
the right traceplot of parameter $\theta_{8107}$ (output layer)
has non-visible fluctuations under a y-axis range of $[-8, 8]$,
whereas the right traceplot
of parameter $\theta_{1005}$ (first hidden layer)
fluctuates more widely under the same y-axis range.

Figure~\ref{traceplots} suggests that
chains of parameters in shallower layers perform more exploration,
while chains of parameters in deeper layers carry out more exploitation.
This way, chain scales collapse towards point estimates
for increasing network depth.

\subsection{Batch size and log-likelihood}
\label{sec:batch_logl}

For each batch size shown
in
Figure~\ref{mnist_log_lik_boxplots},
the likelihood function of Equation~\eqref{mc_mlp_lik}
is evaluated on $10$ batch samples,
which are drawn from the MNIST training set.
A boxplot is then generated
from the $10$ log-likelihood values
and it is displayed in Figure~\ref{mnist_log_lik_boxplots}.
The log-likelihood function is normalized by batch size
in order to obtain visually comparable boxplots
across different batch sizes.
In \texttt{PyTorch},
the normalized log-likelihood is computed
via the \texttt{CrossEntropyLoss} class
initialized with
\texttt{reduction=\textquotesingle mean\textquotesingle}.
In each boxplot, the blue line and yellow point
correspond to the median and mean
of the $10$ associated log-likelihood values.
The horizontal gray line represents
the log-likelihood value
based on the whole MNIST training set.
Figure~\ref{fmnist_log_lik_boxplots}
is generated using the FMNIST training set,
following an analogous setup.

\begin{figure}[t]
	\begin{subfigure}{.491\textwidth}
		\centering
		\includegraphics[width=1\linewidth]{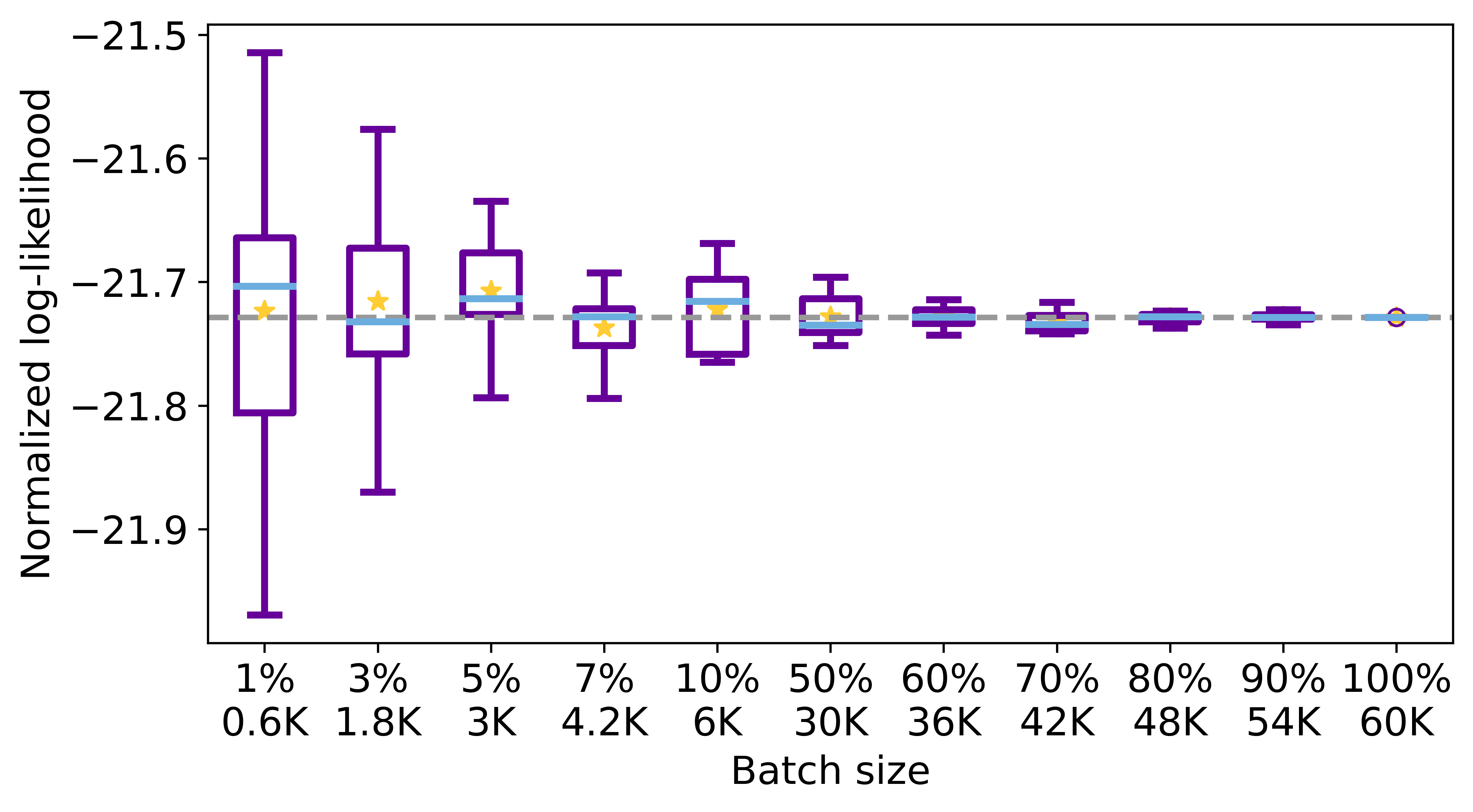}
		\caption{Log-likelihood value boxplots for MNIST.}
		\label{mnist_log_lik_boxplots}
	\end{subfigure}\\
	\begin{subfigure}{.491\textwidth}
		\centering
		\includegraphics[width=1\linewidth]{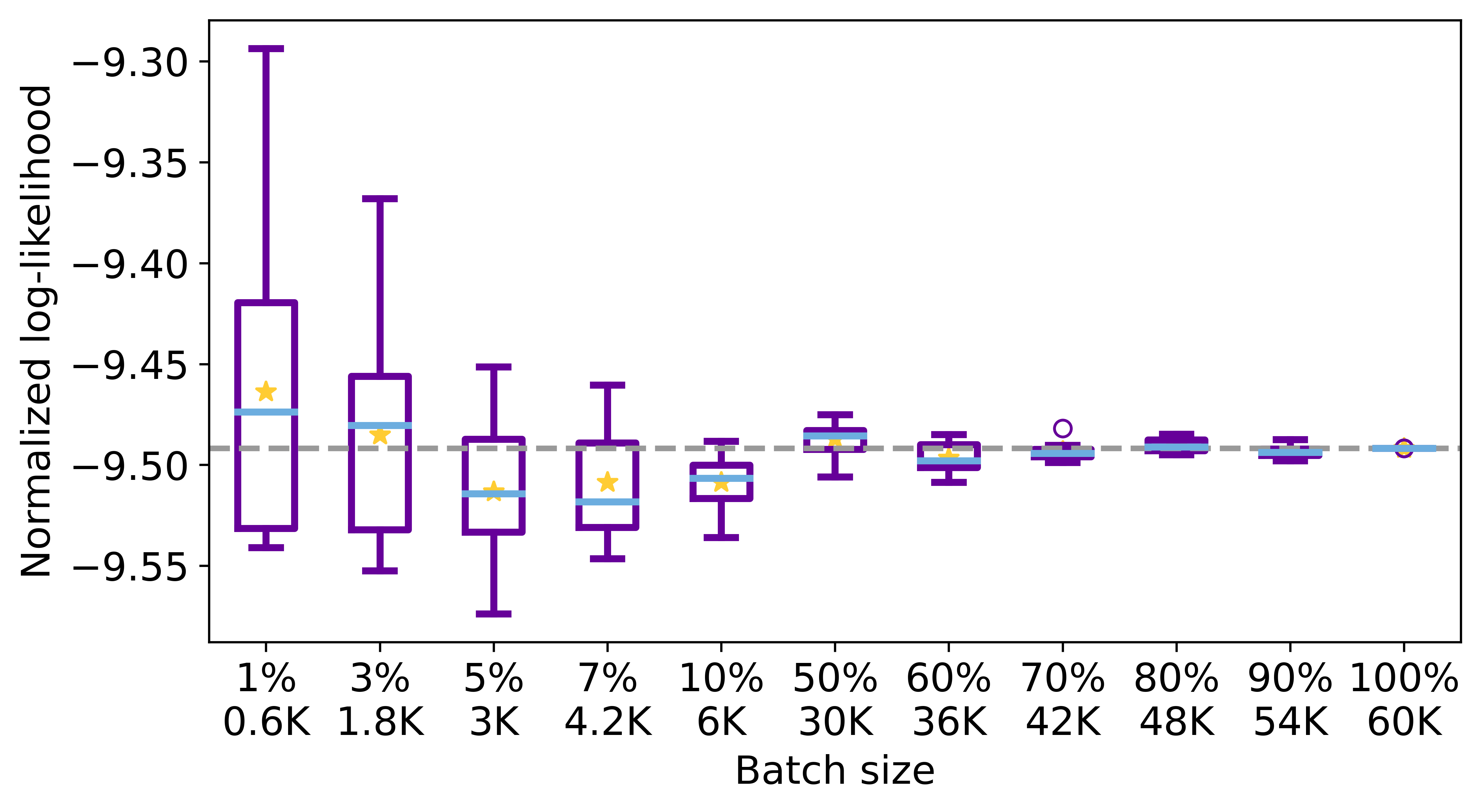}
		\caption{Log-likelihood value boxplots for FMNIST.}
		\label{fmnist_log_lik_boxplots}
	\end{subfigure}
	\caption{Boxplots of normalized log-likelihood values
		for MNIST and FMNIST.
		Each boxplot summarizes normalized log-likelihood values
		of $10$ batch samples for a given batch size.
		To normalize, each log-likelihood value is divided by batch size.
		Blue lines and yellow points correspond to medians and means.
		Horizontal gray lines represent exact log-likelihood values
		for batch size equal to training sample size.}
	\label{log_lik_boxplots}
\end{figure}

Figures~\ref{mnist_log_lik_boxplots} and~\ref{fmnist_log_lik_boxplots}
demonstrate that
log-likelihood values are increasingly volatile
for decreasing batch size.
Furthermore, the volatility of log-likelihood values vanishes
as the batch size gets close to the training sample size.
So, Figure~\ref{log_lik_boxplots} confirms visually
that the approximate likelihood tends to the exact likelihood
for increasing batch size.
Thus, the batch size in FNBG sampling is preferred
to be as large as possible,
up to the point that (finer) block acceptance rates
do not become prohibitively low.

\subsection{Depth and predictions}

Figure~\ref{shallow_vs_deep_boxplots}
explores how network depth affects predictive accuracy in approximate MCMC.
Shallower
$\mbox{MLP}(2, 2, 1)$,
consisting of one hidden layer,
and deeper
$\mbox{MLP}(2, 2, 2, 2, 2, 2, 2, 1)$,
consisting of six hidden layers,
are fitted to the noisy XOR training set using minibatch NBG
with a batch size of $100$ and a proposal variance of $0.04$;
$m=10$ chains are realized for each of the two MLPs.
Subsequently, the predictive accuracy per chain is evaluated
on the noisy XOR test set.
One boxplot is generated for each set of $10$ chains,
as shown in Figure~\ref{shallow_vs_deep_boxplots}.
Blue lines represent medians.

\begin{figure}[t]
	\centering
	\includegraphics[width=1\linewidth]{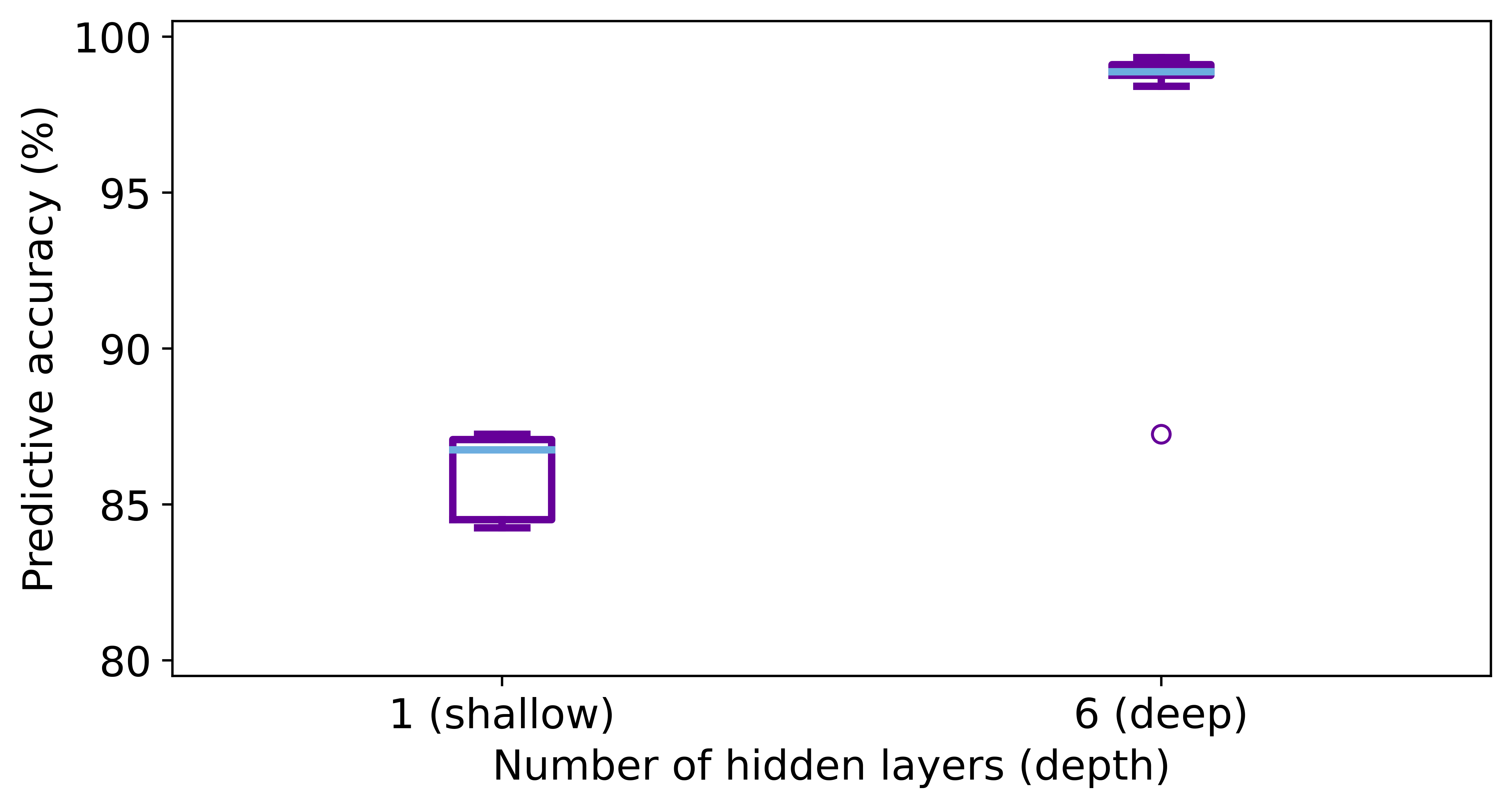}
	\caption{A comparison between a shallower and a deeper MLP architecture.
		Each of
		$\mbox{MLP}(2, 2, 1)$
		and
		$\mbox{MLP}(2, 2, 2, 2, 2, 2, 2, 1)$
		is fitted to noisy XOR
		via minibatch NBG sampling with a batch size of $100$.
		Predictive accuracy boxplots are generated from $10$ chains per MLP.
		Blue lines indicate medians.
	}
	\label{shallow_vs_deep_boxplots}
\end{figure}

The same $10$ chains are used
to generate relevant plots in
Figures~\ref{exact_vs_approx_pred_acc_boxplots},
~\ref{acc_rate_and_depth}
and~\ref{shallow_vs_deep_boxplots}.
In particular,
the leftmost boxplot
in Figure~\ref{exact_vs_approx_pred_acc_boxplots}
and right boxplot
in Figure~\ref{shallow_vs_deep_boxplots}
stem from the same $10$ chains and are thus identical.
Figure~\ref{acc_rate_and_depth}
shows mean acceptance rates per node and per layer
across the $10$ chains that also yield
the right boxplot of predictive accuracies
in Figure~\ref{shallow_vs_deep_boxplots}.

$\mbox{MLP}(2, 2, 1)$
and
$\mbox{MLP}(2, 2, 2, 2, 2, 2, 2, 1)$ have
respective predictive accuracy medians of
$86.75\%$ and $98.88\%$
as blue lines indicate in Figure~\ref{shallow_vs_deep_boxplots},
so predictive accuracy increases with increasing depth.
Moreover, the interquartile ranges 
of Figure~\ref{shallow_vs_deep_boxplots}
demonstrate that a deeper architecture yields less volatile,
and in that sense more stable, predictive accuracy.
As an overall empirical observation,
increasing the network depth in approximate MCMC
seems to produce higher and less volatile predictive accuracy.



\begin{obs}
Increasing the depth of a feedforward neural network
increases the predictive accuracy but
reduces the acceptance rates for blocks in deeper layers.
Reducing the proposal variance in deeper layers
helps counter the reduction of acceptance rates.
Increasing the network width in initial layers
does not have a negative impact on acceptance rates,
in contrast to the negative impact of
increasing depth on acceptance rates.
\end{obs}

\subsection{Batch size and predictions}
\label{subs:bs_pred}

This subsection assesses empirically
the effect of batch size on predictive accuracy
in approximate MCMC.
To this end,
$\mbox{MLP}(784, 10, 10, 10, 10)$
is fitted to the MNIST and FMNIST training sets
using minibatch FNBG sampling with batch sizes of
$600$, $1800$, $3000$ and $4200$,
which correspond to
$1\%$, $3\%$, $5\%$ and $7\%$
of each training sample size.
One chain is realized
per combination of training set and batch size.
Table~\ref{tab_batch_size_acc}
reports the predictive accuracy
for each chain.

\begin{table}[t]
	\centering
	\caption{Predictive accuracies
		obtained by fitting
		$\mbox{MLP}(784, 10, 10, 10, 10)$
		to MNIST and to FMNIST
		via minibatch FNBG sampling
		with different batch sizes.}
	\label{tab_batch_size_acc}
	\begin{tabular}{lrrrr}
		\hline
		\multicolumn{1}{c}{\multirow{3}{*}{Dataset}} &
		\multicolumn{4}{c}{Batch size} \\ \cline{2-5}
		&
		\multicolumn{1}{c}{1\%}  &
		\multicolumn{1}{c}{3\%}  &
		\multicolumn{1}{c}{5\%}  &
		\multicolumn{1}{c}{7\%} \\
		&
		\multicolumn{1}{c}{0.6K}  &
		\multicolumn{1}{c}{1.8K}  &
		\multicolumn{1}{c}{3K}  &
		\multicolumn{1}{c}{4.2K} \\
		\hline
		MNIST  & 85.99 & 89.01 & 90.75 & 90.43 \\
		FMNIST & 71.50 & 80.07 & 80.89 & 79.17 \\
		\hline
	\end{tabular}
\end{table}

The same chains are used
to compute predictive accuracies
in Table~\ref{tab_batch_size_acc}
as well as acceptance rates
in Tables~\ref{mnist_tuning_layers}
and~\ref{fmnist_tuning_layers}
of~\ref{app:tuning}.
The chain that yields
the predictive accuracy
for MNIST and for a batch size of $3000$
(first row and third column of Table~\ref{tab_batch_size_acc})
is partly visualized by traceplots in Figure~\ref{traceplots}.

According to Table~\ref{tab_batch_size_acc},
the highest accuracy of $90.75\%$ for MNIST
and of $80.89\%$ for FMNIST
are attained by employing a batch size of $3000$.
Overall, predictive accuracy increases as batch size increases.
However, predictive accuracy decreases
when batch size increases from $3000$ to $4200$;
this is explained by the fact that
a batch size of $4200$ is too large,
in the sense that it reduces acceptance rates
(see Tables~\ref{mnist_tuning_layers}
and~\ref{fmnist_tuning_layers}).
So, as pointed out in Subsection~\ref{sec:batch_logl},
a tuning guideline is to increase the batch size
up to the point that no substantial reduction
in finer block acceptance rates occurs.

An attained predictive accuracy of $90.75\%$ on MNIST
demonstrates that non-convergent chains (simulated
via minibatch FNBG) learn from data,
since data-agnostic guessing based on pure chance
has a predictive accuracy of $10\%$.
While stochastic optimization algorithms for deep learning
achieve predictive accuracies higher than $90.75\%$ on MNIST,
the goal of this work
has not been to construct an approximate MCMC algorithm
that outperforms stochastic optimization
on the predictive front.
The main objective has been
to demonstrate that approximate MCMC for neural networks
learns from data and
to uncover associated sampling characteristics,
such as diminishing chain ranges (Figure~\ref{traceplots})
and diminishing acceptance rates (Tables~\ref{mnist_tuning_layers}
and~\ref{fmnist_tuning_layers})
for increasing network depth.
Similar predictive accuracies in the vicinity of $90\%$
using Hamiltonian Monte Carlo for deep learning
have been reported
in the literature~\citep{wenzel2020, izmailov2021}.
Nonetheless, this body of relevant work
relies on chain lengths
one or two orders of magnitude shorter;
\tp{for instance, \cite{izmailov2021}
have run up to $900$ iterations per chain realization.} 
The present paper proposes
to circumvent vanishing acceptance rates
by grouping neural network parameters into smaller blocks,
thus enabling the generation of lengthier chains.

\begin{obs}
Increasing the batch size
in minibatch MCMC sampling
of feedforward neural network parameters
increases the predictive accuracy.
This observation is anticipated,
in the sense that minibatch MCMC
becomes exact MCMC
when the batch size is
equal to the training sample size.
However, the batch size can be increased
up to the point that no substantial reduction
in acceptance rates occurs.
\end{obs}



\subsection{Chain length and predictions}

It is reminded that $110000$ iterations
are run per chain in the experiments herein,
of which the first $10000$ are discarded as burnin.
The last $v=10000$ (out of the remaining $100000$) iterations
are used for making predictions
via Bayesian marginalization
based on Equation~\eqref{pred_posterior_approx}.
Only $10000$ iterations are utilized
in Equation~\eqref{pred_posterior_approx}
to cap the computational cost for predictions.

There exists a tractable solution to Bayesian marginalization,
since the approximate posterior predictive pmf
of Equation~\eqref{pred_posterior_approx}
can be computed in parallel
both in terms of Monte Carlo iterations and of test points.
The implementation of such a parallel solution is deferred to future work.

In the meantime, it is examined here
how chain length affects predictive accuracy.
Along these lines, predictive accuracies are computed
from the last $1000$, $10000$, $20000$ and $30000$ iterations
of the chain realized via minibatch FNBG
with a batch size of $3000$ for each of MNIST and FMNIST
(see Table~\ref{tab_chain_len_acc}).
The last $10000$ and all $100000$ post-burnin iterations
of the same chain
generate predictive accuracies
in Table~\ref{tab_batch_size_acc} and
acceptance rates in Tables~\ref{mnist_tuning_layers}
and~\ref{fmnist_tuning_layers}, respectively.

\begin{table}[t]
	\centering
	\caption{Predictive accuracies
		obtained from different chain lengths.
		$\mbox{MLP}(784, 10, 10, 10, 10)$
		is fitted to MNIST and to FMNIST
		via minibatch FNBG sampling
		with a batch size of $3000$.
		One chain is realized per dataset.
		Subsequently, predictions are made
		via Bayesian marginalization
		using chunks of different length
		from the end of the realized chains.}
	\label{tab_chain_len_acc}
	\begin{tabular}{lrrrr}
		\hline
		\multicolumn{1}{c}{\multirow{2}{*}{Dataset}} &
		\multicolumn{4}{c}{Chain length} \\ \cline{2-5}
		&
		\multicolumn{1}{c}{1K}  &
		\multicolumn{1}{c}{10K} &
		\multicolumn{1}{c}{20K} &
		\multicolumn{1}{c}{30K} \\
		\hline
		MNIST  & 88.31 & 90.75 & 91.12 & 91.20 \\
		FMNIST & 78.93 & 80.89 & 81.36 & 81.53 \\
		\hline
	\end{tabular}
\end{table}

Table~\ref{tab_chain_len_acc}
demonstrates that predictive accuracy increases
(both for MNIST and FMNIST)
as chain length increases.
So, as a chain traverses the parameter space of a neural network,
information of predictive importance accrues
despite the lack of convergence.
It can also be seen from
Table~\ref{tab_chain_len_acc}
that the rate of improvement in predictive accuracy
slows down for increasing chain length.


\begin{obs}
Despite the lack of convergence and the slow mixing,
increasing the number of approximate MCMC iterations
upon sampling from the parameter space
of a feedforward neural network
increases the predictive accuracy.
The rate of improvement in predictive accuracy
slows down for increasing chain length.
\end{obs}

\subsection{Augmentation and predictions}
\label{subs:data_aug_pred}

To assess the effect of data augmentation on predictive accuracy,
three image transformations are performed on the MNIST and FMNIST training sets,
namely rotations by angle, blurring, and colour inversions.
Images are rotated by angles randomly selected between $-30$ and $30$ degrees.
Each image is blurred with probability $0.9$.
Blur is randomly generated from a Gaussian kernel of size $9\times 9$.
The standard deviation of the kernel is randomly selected between $1$ and $1.5$.
Each image is colour-inverted with probability $0.5$.
Figure~\ref{data_augmentation} in~\ref{app:augmentation}
displays examples of MNIST and FMNIST training images
that have been rotated, blurred or colour-inverted
according to the described transformations.

Each of the three transformations is applied
to the whole MNIST and FMNIST training sets.
Subsequently,
$\mbox{MLP}(784, 10, 10, 10, 10)$
is fitted to each transformed training set via minibatch FNBG
with a batch size of $3000$ and
with proposal variances specified
in Tables~\ref{mnist_tuning_layers}
and~\ref{fmnist_tuning_layers}.
One chain is simulated per transformed training set.
Predictive accuracies are computed
on the corresponding untransformed MNIST and FMNIST test sets
and are reported in Table~\ref{tab:augmentation_acc}.
Moreover, predictive accuracies
based on the untransformed MNIST and FMNIST training sets
are available in the first column of
Table~\ref{tab:augmentation_acc},
as previously reported in Table~\ref{tab_batch_size_acc}.

\begin{table}[t]
	\centering
	\caption{Predictive accuracies
		obtained from different data augmentation schemes.
		$\mbox{MLP}(784, 10, 10, 10, 10)$
		is fitted to each of the
		augmented MNIST and FMNIST training sets
		via minibatch FNBG sampling
		with a batch size of $3000$.
		Predictive accuracies are computed on
		the corresponding non-augmented test sets.
		The first column reports predictive accuracies
		based on the non-augmented MNIST and FMNIST training sets.}
	\label{tab:augmentation_acc}
	\begin{tabular}{lrrrr}
		\hline
		\multicolumn{1}{c}{\multirow{2}{*}{Dataset}} &
		\multicolumn{4}{c}{Transform} \\ \cline{2-5}
		&
		\multicolumn{1}{c}{None} &
		\multicolumn{1}{c}{Rotation} &
		\multicolumn{1}{c}{Blur} &
		\multicolumn{1}{c}{Inversion} \\
		\hline
		MNIST  & 90.75 & 86.19 & 85.66 & 36.87 \\
		FMNIST & 80.89 &  6.62 &  7.46 &  8.61 \\
		\hline
	\end{tabular}
\end{table}

According to Table~\ref{tab:augmentation_acc},
if data augmentation is performed,
then predictive accuracy deteriorates drastically.
Notably, data augmentation
has catastrophic predictive consequences for FMNIST.
These empirical findings agree with the
`dirty likelihood hypothesis'
of~\cite{wenzel2020},
according to which data augmentation
violates the likelihood principle.

\begin{obs}
Approximate MCMC sampling
of feedforward neural network parameters
in the presence of augmented data
remains an open problem.
Data augmentation violates the likelihood principle
and consequently reduces drastically
the predictive accuracy.
\end{obs}

\subsection{Uncertainty quantification}


Approximate MCMC enables
predictive uncertainty quantification (UQ)
via Bayesian marginalization.
Such a principled approach to UQ
constitutes an advantage of approximate MCMC
over stochastic optimization in deep learning.
This subsection showcases
how predictive uncertainty
is quantified for neural networks
via minibatch FNBG sampling.

Recall that one chain has been simulated
for each of MNIST and FMNIST
to compute the predictive accuracies
of column $3$ in Table~\ref{tab_batch_size_acc}
(see Subsection~\ref{subs:bs_pred}).
Those chains
are used to estimate posterior predictive probabilities
for some images in the corresponding test sets,
as shown in Figure~\ref{uq}.
All test images in Figure~\ref{uq}
have been correctly classified
via Bayesian marginalization.

\begin{figure}[t]
	\begin{subfigure}{.491\textwidth}
		\centering
		\includegraphics[width=1\linewidth]{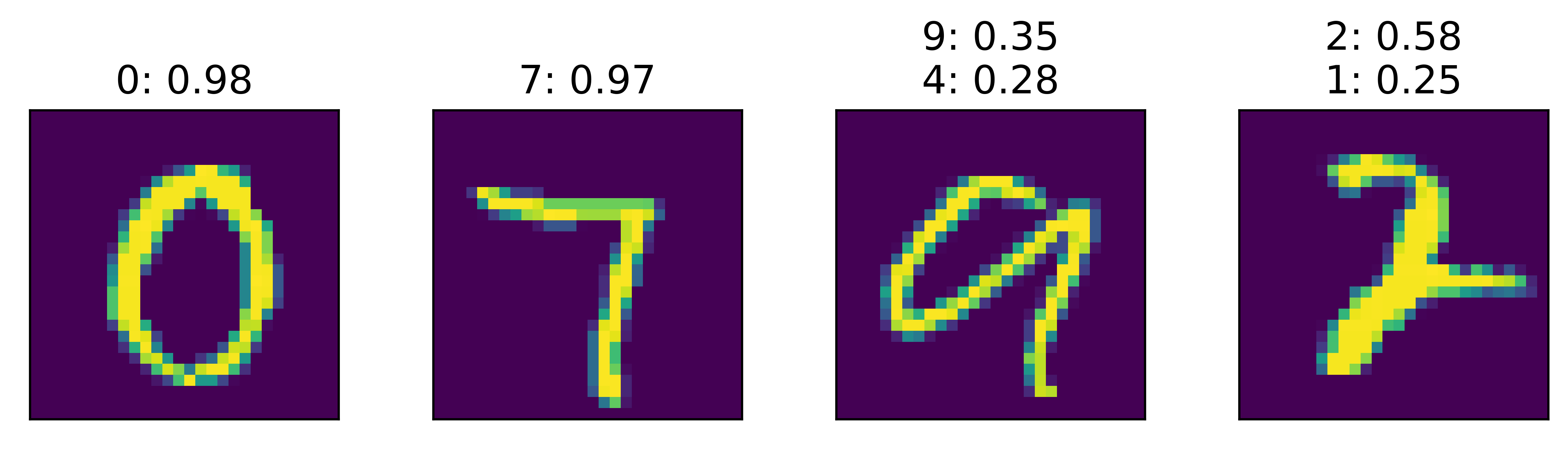}
		\caption{Posterior predictive probabilities for MNIST.}
		\label{mnist_uq}
	\end{subfigure}\\
	\begin{subfigure}{.491\textwidth}
		\centering
		\includegraphics[width=1\linewidth]{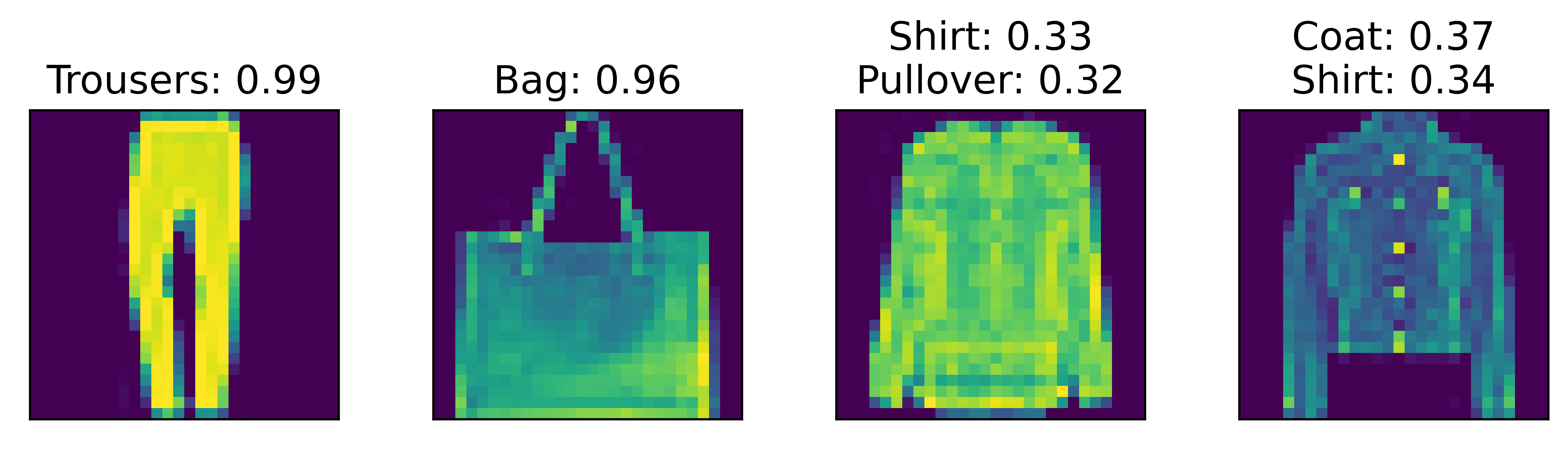}
		\caption{Posterior predictive probabilities for FMNIST.}
		\label{fmnist_uq}
	\end{subfigure}
	\caption{Demonstration of UQ
		for some correctly classified MNIST and FMNIST test images.
		The highest posterior predictive probability is displayed
		for each image associated with low uncertainty,
		whereas the two highest posterior predictive probabilities
		are displayed for each image
		associated with high uncertainty.}
	\label{uq}
\end{figure}

The first and second MNIST test images
in Figure~\ref{mnist_uq}
show numbers $0$ and $7$,
with corresponding posterior predictive probabilities
$0.98$ and $0.97$
that indicate near-certainty
about the classification outcomes.
The third MNIST test image
in Figure~\ref{mnist_uq}
shows number $9$.
Attempting to classify this image by eye
casts doubt as to whether the number in the image
is $9$ or $4$.
While Bayesian marginalization correctly
classifies the number as $9$,
the posterior predictive probability
$\hat{p}(y=9 \vert x,\mathcal{D}_{1:s})=0.35$
is relatively low,
indicating uncertainty in the prediction.
Moreover, the second highest
posterior predictive probability
$\hat{p}(y=4 \vert x,\mathcal{D}_{1:s})=0.28$
identifies number $4$ as a probable alternative,
in agreement with human perception.
All in all, posterior predictive probabilities
and human understanding are aligned
in terms of perceived predictive uncertainties
and in terms of plausible classification outcomes.
Image $4$ is aligned with image $3$ of
Figure~\ref{mnist_uq} regarding UQ conclusions.

Figure~\ref{fmnist_uq},
which entails FMNIST test images,
is analogous to Figure~\ref{mnist_uq}
from a UQ point of view.
In Figure~\ref{fmnist_uq},
FMNIST test images $1$ and $2$
show trousers and a bag,
with corresponding posterior predictive probabilities
$0.99$ and $0.96$
that indicate near-certainty about the classification outcomes.
The third FMNIST test image
of Figure~\ref{fmnist_uq} shows a shirt.
It is not visually clear whether this image depicts
a shirt or a pullover.
While Bayesian marginalization correctly identifies
the object as a shirt,
the posterior predictive probabilities
$\hat{p}(y=\mbox{shirt} \vert x,\mathcal{D}_{1:s})=0.33$
and
$\hat{p}(y=\mbox{pullover} \vert x,\mathcal{D}_{1:s})=0.32$
capture human uncertainty and
identify the two most plausible classification outcomes.
Image $4$ is analogous to image $3$
of Figure~\ref{fmnist_uq}
in terms of UQ conclusions.

\begin{obs}
A non-convergent chain realization
via approximate MCMC sampling
of feedforward neural network parameters
can help with the assessment of predictive uncertainty meaningfully,
that is in agreement with human insights.
\end{obs}

\section{Future work}
\label{discussion}


Several future research directions emerge from this paper;
\tp{three} software engineering extensions are planned,
three methodological developments are proposed,
and one theoretical question is posed.

To start with possible software engineering work,
Bayesian marginalization can be parallelized
across test points and across FNBG iterations per test point.
Additionally, an adaptive version of FNBG sampling
can be implemented based on existing Gibbs sampling methods
for proposal variance tuning~\tp{\citep{andrieu2008}},
thus automating tuning and
reducing tuning computational requirements.
\tp{Moreover, FNBG sampling can be implemented with a subsampling mechanism
that sets the batch size adaptively~\citep{bardenet2014}.}

In terms of methodological developments,
alternative ways of grouping parameters in FNBG sampling
may be considered.
For example, parameters may be grouped
according to their covariance structure,
as estimated from pilot FNBG runs.
Furthermore, functional priors proposed by~\cite{tran2022}
or adaptations of them may be utilized in conjunction with FNBG.
Moreover, FNBG sampling may be developed
for neural network architectures other than MLPs.
To this end, DAG representations
of other neural network architectures
will be devised and
fine parameter blocks will be identified from the DAGs.

A theoretical question of interest is
how to construct lower bounds of predictive accuracy
for minibatch FNBG (and for minibatch MCMC more generally)
as a function of the distance between
the exact and approximate parameter posterior density.
It has been observed empirically that
minibatch FNBG has predictive capacity,
yet theoretical guarantees for predictive accuracy
have not been established.

The proposed sampling approach and future developments
face two main limitations.
Firstly, it remains an open question how to sample
neural network parameters
given augmented training data,
as previously pointed out by the `dirty likelihood hypothesis' of
~\cite{wenzel2020}.
Secondly,
as the depth of a feedforward neural network increases,
the proposal variance of FNBG is reduced for deeper layers.
Thus, the proposal variance for deeper layers
may be set to a value too close to zero
from a practical point of view.


\section*{Software and data}

The FNBG sampler
for MLPs
has been implemented under the \texttt{eeyore} package
using \texttt{Python} and \texttt{PyTorch}.
\texttt{eeyore} is available at
\url{https://github.com/papamarkou/eeyore}.
Source code for the examples of Section \ref{experiments}
can be found in \texttt{dmcl\_examples},
forming a separate \texttt{Python} package
based on \texttt{eeyore}.
\texttt{dmcl\_examples} can be downloaded from
\url{https://github.com/papamarkou/dmcl\_examples}.

\section*{Appendix A: blocked Gibbs}
\sname{Appendix A}
\label{app:bg}

Algorithm~\ref{bg} summarizes blocked Gibbs sampling
in the context of sampling MLP parameters,
as set out in Subsection~\ref{bgs}.
\tp{
	Remark~\ref{mwbg_ac_prop} and
	Remark~\ref{mwbg_ac_corol}
	provide expressions for the acceptance probability
	of candidate state $\theta_{z(q)}^{\star}$
	of Algorithm~\ref{bmwg_e} (MWBG sampling),
	as stated in Equation~\eqref{mwbg_ac_e_eq}
	of Subsection~\ref{ss:bgs_via_cs}.
}

\begin{algorithm*}[t] 
	\caption{Blocked Gibbs sampling}
	\label{bg}
	\begin{algorithmic}[1]
		\State{\textbf{Input}:
			training dataset $\mathcal{D}_{1:s}$
		}
		\State{\textbf{Input}:
			initial state
			$ \theta_{z(1):z(m)}^{(0)}$
		}
		\State{\textbf{Input}:
			number of sampling iterations $v$
		}
		
		$~$
		
		\For{$t=1,\ldots,v$}
		\State{Draw
			$\theta_{z(1)}^{(t)}\sim
			p(\theta_{z(1)} \vert
			\theta_{z(2):z(m)}^{(t-1)},
			\mathcal{D}_{1:s})$
		}
		\State{Draw
			$\theta_{z(2)}^{(t)}\sim
			p(\theta_{z(2)} \vert
			\theta_{z(1)}^{(t)},
			\theta_{z(3):z(m)}^{(t-1)},
			\mathcal{D}_{1:s})$
		}
		
		$\vdots$
		
		\State{Draw
			$\theta_{z(q)}^{(t)}\sim
			p(\theta_{z(q)} \vert
			\theta_{z(1):z(q-1)}^{(t)},
			\theta_{z(q+1):z(m)}^{(t-1)},
			\mathcal{D}_{1:s})$
		}
		
		$\vdots$
		
		\State{Draw
			$\theta_{z(m)}^{(t)}\sim
			p(\theta_{z(m)} \vert
			\theta_{z(1):z(m-1)}^{(t)},
			\mathcal{D}_{1:s})$
		}		
		\EndFor
	\end{algorithmic}
\end{algorithm*}


\tp{
\begin{prop}
	\label{mwbg_ac_prop}
	Consider an $\mbox{MLP}(\kappa_{0:\rho})$
	with likelihood function
	$\mathcal{L}(y_{1:s} \vert x_{1:s}, \theta)$
	specified by Equation~\eqref{mc_mlp_lik},
	where
	$\{(x_{i}, y_{i}):~i=1,2,\dots,s\}$
	is a training dataset related to
	a supervised classification problem
	and $\theta$ are the MLP parameters.
	Let $\pi (\theta) = \prod_{q=1}^{m}\pi (\theta_{z(q)})$
	be a parameter prior density based on a partition
	$\{\theta_{z(1)},\theta_{z(2)},\ldots,\theta_{z(m)}\}$
	of $\theta$.
	A MWBG version of Algorithm~\ref{bg}
	is used for sampling from the target density
	$p(\theta \vert x_{1:s}, y_{1:s})$.
	At iteration $t$,
	a candidate state $\theta_{z(q)}^{\star}$
	for parameter block $\theta_{z(q)}$
	is drawn from the isotropic normal proposal density
	$\mathcal{N}(\theta_{z(q)}^{(t-1)}, \sigma_q^2 I_q)$.
	The acceptance probability
	$a (\theta_{z(q)}^{\star}, \theta_{z(q)}^{(t-1)})$
	of $\theta_{z(q)}^{\star}$ is given by
	\begin{equation}
	\label{mwbg_ac_l_eq}
	\begin{split}
	& a (\theta_{z(q)}^{\star}, \theta_{z(q)}^{(t-1)}) = \\
	& \min{\Biggl\{
		\displaystyle
		\frac{\mathcal{L}(y_{1:s} \vert x_{1:s}, \theta^{\star})
			\pi (\theta_{z(q)}^{\star})}
		{\mathcal{L}(y_{1:s} \vert x_{1:s}, \theta^{(t-1)})
			\pi (\theta_{z(q)}^{(t-1)})},
		1\Biggr\} },
	\end{split}
	\end{equation}
	where
	$\theta^{(t-1)}$ and $\theta^{\star}$
	denote the values of $\theta$
	obtained by inverting the permutations
	$
	(
	\theta_{z(1):z(q-1)}^{(t)},
	\theta_{z(q):z(m)}^{(t-1)}
	)$
	and
	$
	(
	\theta_{z(1):z(q-1)}^{(t)},
	\theta_{z(q)}^{\star},
	\theta_{z(q+1):z(m)}^{(t-1)}
	)$,
	respectively.
\end{prop}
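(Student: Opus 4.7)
The plan is to apply the standard Metropolis--Hastings acceptance formula to the conditional target of block $\theta_{z(q)}$ and then simplify using two structural features of the setup: symmetry of the proposal and factorization of the prior.

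First, I would write out the Metropolis--Hastings acceptance probability for the block update. At iteration $t$, the sub-step that updates $\theta_{z(q)}$ is a Metropolis step whose target is the full conditional $p(\theta_{z(q)} \mid \theta_{z(1):z(q-1)}^{(t)}, \theta_{z(q+1):z(m)}^{(t-1)}, \mathcal{D}_{1:s})$ and whose proposal is $\mathcal{N}(\theta_{z(q)}^{(t-1)}, \sigma_q^2 I_q)$. Hence the usual ratio of target$\,\times\,$reverse-proposal over target$\,\times\,$forward-proposal applies. Because the isotropic normal density is symmetric in its two arguments, the proposal factors cancel, leaving only the ratio of the conditional target evaluated at $\theta_{z(q)}^{\star}$ and at $\theta_{z(q)}^{(t-1)}$.

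Second, I would rewrite the conditional as a joint up to a multiplicative constant in $\theta_{z(q)}$. By the definition of conditional density, $p(\theta_{z(q)} \mid \theta_{-z(q)}, \mathcal{D}_{1:s}) \propto p(\theta \mid \mathcal{D}_{1:s})$, where the proportionality constant depends only on the fixed blocks and the data, and therefore drops out of the acceptance ratio. Substituting the posterior from Equation~\eqref{param_posterior} turns the ratio into
\[
\frac{\mathcal{L}(y_{1:s}\mid x_{1:s},\theta^{\star})\,\pi(\theta^{\star})}
     {\mathcal{L}(y_{1:s}\mid x_{1:s},\theta^{(t-1)})\,\pi(\theta^{(t-1)})},
\]
where $\theta^{\star}$ and $\theta^{(t-1)}$ are reconstructed from the current and candidate block values, as defined in the statement.

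Third, I would use the prior factorization $\pi(\theta) = \prod_{r=1}^{m}\pi(\theta_{z(r)})$. Since $\theta^{\star}$ and $\theta^{(t-1)}$ agree on every block except $\theta_{z(q)}$, all factors $\pi(\theta_{z(r)})$ for $r \neq q$ are identical in the numerator and denominator and cancel, leaving only $\pi(\theta_{z(q)}^{\star})/\pi(\theta_{z(q)}^{(t-1)})$. The likelihood ratio does not simplify further because $\mathcal{L}$ in~\eqref{mc_mlp_lik} depends on $\theta$ through the recursive composition in~\eqref{mlp_g}--\eqref{mlp_h}, which couples all blocks. Combining the surviving factors with $\min\{\cdot,1\}$ yields Equation~\eqref{mwbg_ac_l_eq}.

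The computation itself is routine; the only real care needed is the bookkeeping around the permutation that reassembles $\theta^{\star}$ and $\theta^{(t-1)}$ from the mixture of up-to-date blocks $\theta_{z(1):z(q-1)}^{(t)}$ and stale blocks $\theta_{z(q+1):z(m)}^{(t-1)}$, so that one can legitimately invoke the prior-independence cancellation on a block-by-block basis. This is the main ``obstacle'' and should be stated explicitly before simplifying the ratio.
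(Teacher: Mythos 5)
Your proposal is correct and follows essentially the same route as the paper's proof: cancel the symmetric normal proposal in the Metropolis--Hastings ratio, reduce the ratio of full conditionals to a ratio of joint posteriors (the normalizing factor depending only on the fixed blocks and the data cancels), then substitute $p(\theta\mid\mathcal{D}_{1:s})\propto\mathcal{L}(y_{1:s}\mid x_{1:s},\theta)\,\pi(\theta)$ and use the prior factorization to cancel the $\pi(\theta_{z(r)})$ factors for $r\neq q$. No gaps; the bookkeeping you flag about reassembling $\theta^{\star}$ and $\theta^{(t-1)}$ from the updated and stale blocks is exactly the point the paper handles via its permutation notation.
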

}

\tp{
\begin{prop}
	\label{mwbg_ac_corol}
	Consider an $\mbox{MLP}(\kappa_{0:\rho})$ with
	cross-entropy loss function
	$\mathcal{E}(\theta,\mathcal{D}_{1:s})$,
	where
	$\mathcal{D}_{1:s}=\{(x_{i}, y_{i}):~i=1,2,\dots,s\}$
	is a training dataset
	related to
	a supervised classification problem
	and $\theta$ are the MLP parameters.
	It is assumed that $\mathcal{E}$ is unnormalized,
	which means that it is not scaled
	by batch size.
	Under the sampling setup of Remark~\ref{mwbg_ac_prop},
	the acceptance probability of $\theta_{z(q)}^{\star}$,
	expressed in terms of cross-entropy loss function $\mathcal{E}$,
	is given by
	\begin{equation}
	\label{mwbg_ac_e_eq_app}
\begin{split}
& a (\theta_{z(q)}^{\star}, \theta_{z(q)}^{(t-1)}) = \\
&
\min{\left\{
	\displaystyle
	\frac{\pi (\theta_{z(q)}^{\star})  \exp{\left(
			\mathcal{E}(\theta^{(t-1)}, \mathcal{D}_{1:s})
			\right)} }
	{\pi (\theta_{z(q)}^{(t-1)})
		\exp{\left(
			\mathcal{E}(\theta^{\star}, \mathcal{D}_{1:s})
			\right)} },
	1\right\}}.
\end{split}
	\end{equation}
\end{prop}
}


\tp{
The relation between the cross-entropy loss function
	\begin{equation}
\label{app_a_eq_4}
\begin{split}
& \mathcal{E}(\theta, \mathcal{D}_{1:s}) =\\ 
& -
\sum_{i=1}^{s}
\sum_{k=1}^{\kappa_{\rho}}
\mathbbm{1}_{\{y_{i}=k\}}
\log{(h_{\rho,k}(x_{i},\theta))}
\end{split}
\end{equation}
and the likelihood function of Equation~\eqref{mc_mlp_lik}
is given by
	\begin{equation}
	\label{app_a_eq_5}
	\mathcal{L}(y_{1:s} \vert x_{1:s}, \theta ) =
	\exp{(-\mathcal{E}(\theta, \mathcal{D}_{1:s}))}.
	\end{equation}
Combining Equations~\eqref{mwbg_ac_l_eq} and~\eqref{app_a_eq_5}
yields Equation~\eqref{mwbg_ac_e_eq_app}. 
}

\tp{
Remark~\ref{mwbg_ac_prop}
states the acceptance probability
in statistical terms
using the likelihood function,
whereas Remark~\ref{mwbg_ac_corol}
states it in deep learning terms
using the cross-entropy loss function.
Remark~\ref{mwbg_ac_corol} is practical
in the sense that
deep learning software frameworks,
being geared towards optimization,
provide implementations of cross-entropy loss.
For example,
the unnormalized cross-entropy loss $\mathcal{E}$,
as stated in Equation~\eqref{app_a_eq_4},
can be computed in \texttt{PyTorch}
via the \texttt{CrossEntropyLoss} class
initialized with
\texttt{reduction=\textquotesingle sum\textquotesingle}.
}

\section*{Appendix B: noisy XOR}
\sname{Appendix B}
\label{app:noisy_xor}

Figure~\ref{noisy_xor_scatter} shows
the noisy XOR training and test datasets
used in Section~\ref{experiments}.
Information about how these noisy XOR datasets
have been simulated is available in~\cite{papamarkou2022}.

\begin{figure}[t]
	\centering
	\includegraphics[width=1\linewidth]{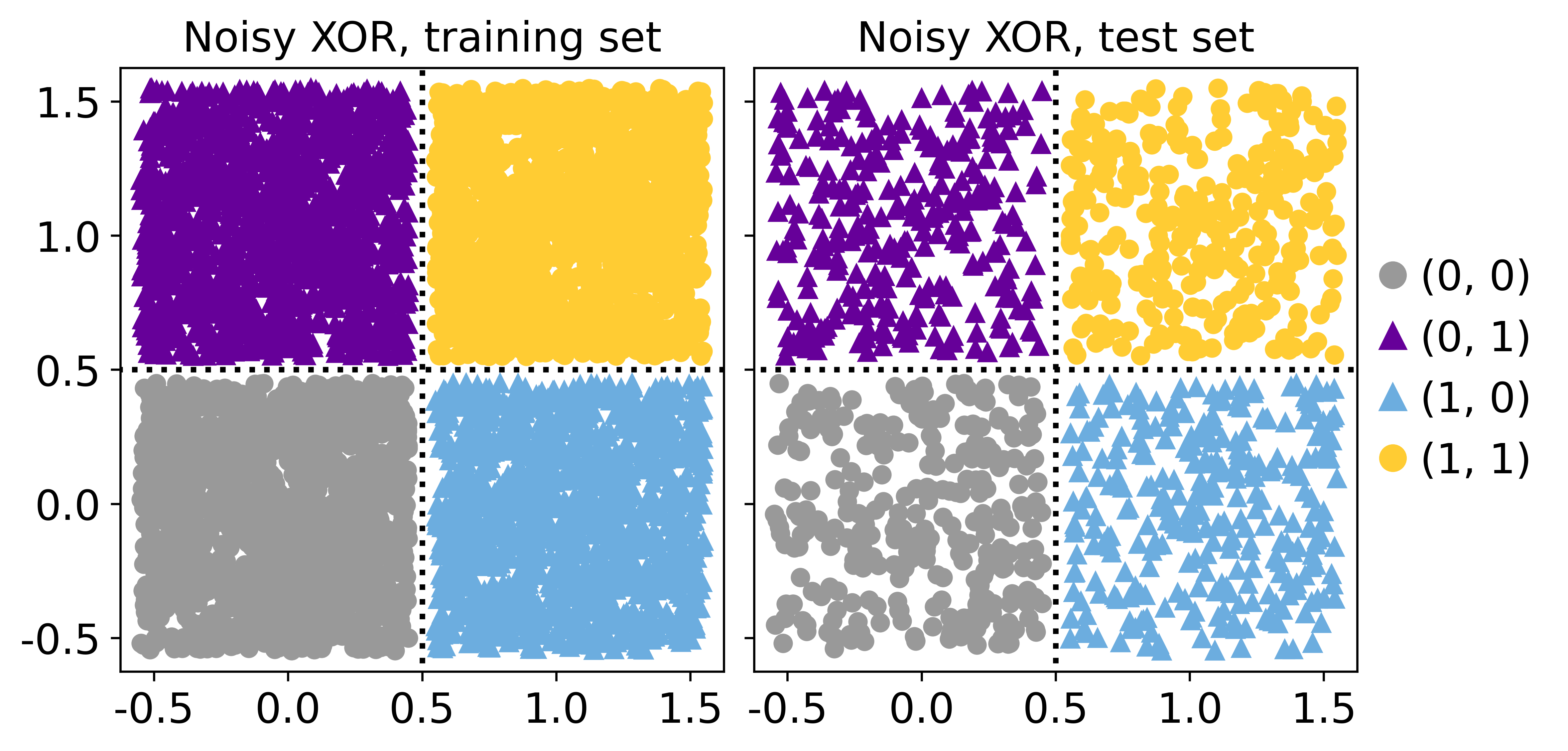}
	\caption{Noisy XOR training set (left) and test set (right)
		consisting of $5000$ and $1200$ data points, respectively.}
	\label{noisy_xor_scatter}
\end{figure}


\section*{Appendix C: tuning}
\sname{Appendix C}
\label{app:tuning}


Tables~\ref{mnist_tuning_layers}
and~\ref{fmnist_tuning_layers}
show that acceptance rates
obtained from minibatch FNBG sampling
can be retained at non-vanishing levels
in deeper layers
by reducing the proposal variances
corresponding to these layers.
$\mbox{MLP}(784, 10, 10, 10, 10)$
is fitted to MNIST and to FMNIST
via minibatch FNBG sampling
with different batch sizes.
The acceptance rate per layer
is computed from one chain
for each batch size.
Tables~\ref{mnist_tuning_layers}
and~\ref{fmnist_tuning_layers}
report the obtained acceptance rates
for MNIST and for FMNIST, respectively.

\begin{table}[t]
	\centering
	\caption{Acceptance rate per layer
		obtained by fitting
		$\mbox{MLP}(784, 10, 10, 10, 10)$
		to MNIST via minibatch FNBG sampling
		with different batch sizes.
	    $\sigma$ denotes the proposal standard deviation.}
	\label{mnist_tuning_layers}
	\begin{tabular}{clrr}
		\hline
		\multicolumn{2}{c}{Layer} & \multicolumn{1}{c}{$\sigma$} & Rate \\ \hline
		\multicolumn{4}{c}{$\mbox{Batch size}=600~(1\%)$} \\ \hline
		\multirow{3}{*}{\rotatebox[origin=c]{90}{Hidden}}
		& $1^{st}$                   & $5\cdot 10^{-2}$ & 45.56 \\
		& $2^{nd}$                   & $5\cdot 10^{-4}$ & 26.43 \\
		& $3^{rd}$                   & $5\cdot 10^{-4}$ & 26.28 \\ \cline{1-2}
		\multicolumn{2}{c}{Output} & $5\cdot 10^{-5}$ & 29.18 \\ \hline
		\multicolumn{4}{c}{$\mbox{Batch size}=1800~(3\%)$} \\ \hline
		\multirow{3}{*}{\rotatebox[origin=c]{90}{Hidden}}
		& $1^{st}$                   & $2\cdot 10^{-2}$ & 41.41 \\
		& $2^{nd}$                   & $2\cdot 10^{-4}$ & 30.68 \\
		& $3^{rd}$                   & $2\cdot 10^{-4}$ & 31.92 \\ \cline{1-2}
		\multicolumn{2}{c}{Output} & $2\cdot 10^{-5}$ & 35.66 \\ \hline
		\multicolumn{4}{c}{$\mbox{Batch size}=3000~(5\%)$} \\ \hline
		\multirow{3}{*}{\rotatebox[origin=c]{90}{Hidden}}
		& $1^{st}$                   &        $10^{-2}$ & 54.95 \\
		& $2^{nd}$                   &        $10^{-4}$ & 45.73 \\
		& $3^{rd}$                   &        $10^{-4}$ & 44.98 \\ \cline{1-2}
		\multicolumn{2}{c}{Output} &        $10^{-5}$ & 51.54 \\ \hline
		\multicolumn{4}{c}{$\mbox{Batch size}=4200~(7\%)$} \\ \hline
		\multirow{3}{*}{\rotatebox[origin=c]{90}{Hidden}}
		& $1^{st}$                   &        $10^{-2}$ & 31.68 \\
		& $2^{nd}$                   &        $10^{-4}$ & 20.17 \\
		& $3^{rd}$                   &        $10^{-4}$ & 19.76 \\ \cline{1-2}
		\multicolumn{2}{c}{Output} &        $10^{-5}$ & 22.22 \\ \hline
	\end{tabular}
\end{table}


\begin{table}[t]
	\centering
	\caption{Acceptance rate per layer
		obtained by fitting
		$\mbox{MLP}(784, 10, 10, 10, 10)$
		to FMNIST via minibatch FNBG sampling
		with different batch sizes.
	    $\sigma$ denotes the proposal standard deviation.}
	\label{fmnist_tuning_layers}
	\begin{tabular}{clrr}
		\hline
		\multicolumn{2}{c}{Layer} & \multicolumn{1}{c}{$\sigma$} & Rate \\ \hline
		\multicolumn{4}{c}{$\mbox{Batch size}=600~(1\%)$} \\ \hline
		\multirow{3}{*}{\rotatebox[origin=c]{90}{Hidden}}
		& $1^{st}$                   & $5\cdot 10^{-2}$ & 47.86 \\
		& $2^{nd}$                   & $5\cdot 10^{-4}$ & 34.61 \\
		& $3^{rd}$                   & $5\cdot 10^{-4}$ & 32.99 \\ \cline{1-2}
		\multicolumn{2}{c}{Output} & $5\cdot 10^{-5}$ & 37.73 \\ \hline
		\multicolumn{4}{c}{$\mbox{Batch size}=1800~(3\%)$} \\ \hline
		\multirow{3}{*}{\rotatebox[origin=c]{90}{Hidden}}
		& $1^{st}$                   & $2\cdot 10^{-2}$ & 60.34 \\
		& $2^{nd}$                   & $2\cdot 10^{-4}$ & 46.78 \\
		& $3^{rd}$                   & $2\cdot 10^{-4}$ & 45.91 \\ \cline{1-2}
		\multicolumn{2}{c}{Output} & $2\cdot 10^{-5}$ & 52.07 \\ \hline
		\multicolumn{4}{c}{$\mbox{Batch size}=3000~(5\%)$} \\ \hline
		\multirow{3}{*}{\rotatebox[origin=c]{90}{Hidden}}
		& $1^{st}$                   & $10^{-2}$ & 66.94 \\
		& $2^{nd}$                   & $10^{-4}$ & 57.40 \\
		& $3^{rd}$                   & $10^{-4}$ & 58.48 \\ \cline{1-2}
		\multicolumn{2}{c}{Output} & $10^{-5}$ & 64.64 \\ \hline
		\multicolumn{4}{c}{$\mbox{Batch size}=4200~(7\%)$} \\ \hline
		\multirow{3}{*}{\rotatebox[origin=c]{90}{Hidden}}
		& $1^{st}$                   & $10^{-2}$ & 55.28 \\
		& $2^{nd}$                   & $10^{-4}$ & 47.10 \\
		& $3^{rd}$                   & $10^{-4}$ & 47.19 \\ \cline{1-2}
		\multicolumn{2}{c}{Output} & $10^{-5}$ & 52.75 \\ \hline
	\end{tabular}
\end{table}


\section*{Appendix D: augmentation}
\sname{Appendix D}
\label{app:augmentation}

Figure~\ref{data_augmentation} shows
examples of images from the MNIST and FMNIST training sets
transformed by rotation, blurring and colour inversion.
These transformations are used
in Subsection~\ref{subs:data_aug_pred}
to assess the effect of data augmentation
on predictive accuracy.
Details about the performed transformations are available
in Subsection~\ref{subs:data_aug_pred}.

\begin{figure}[t]
	\begin{subfigure}{.491\textwidth}
		\centering
		\includegraphics[width=1\linewidth]{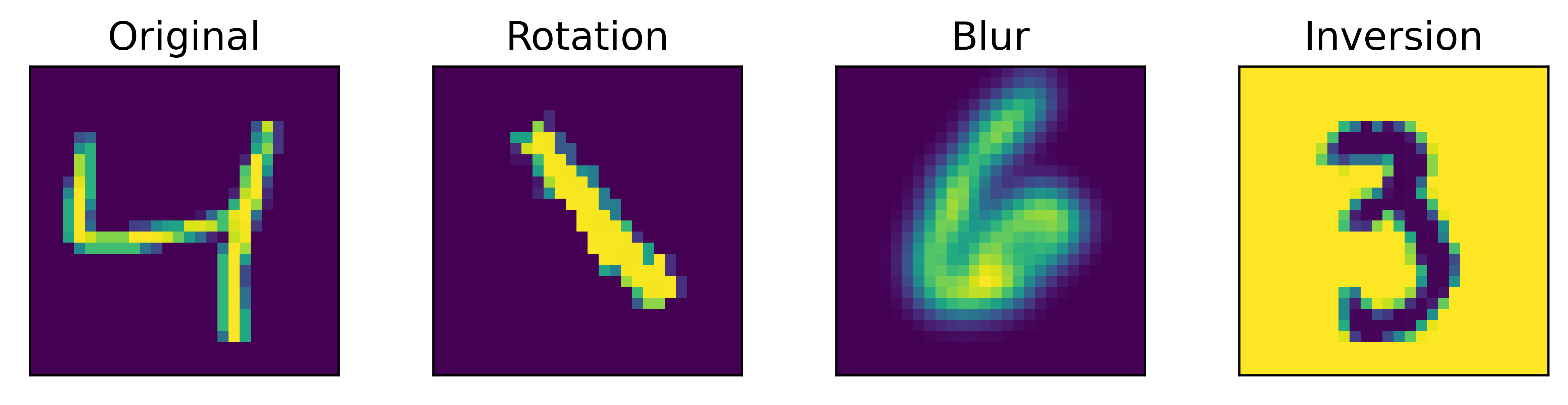}
		\caption{Examples of transformed MNIST training images.}
		\label{mnist_augmentation}
	\end{subfigure}\\
	\begin{subfigure}{.491\textwidth}
		\centering
		\includegraphics[width=1\linewidth]{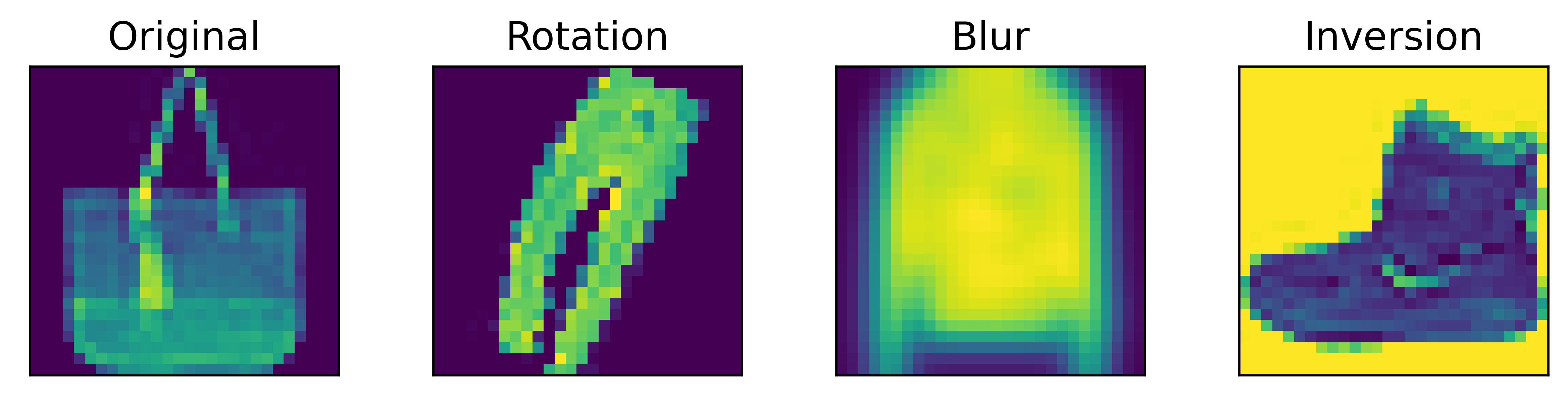}
		\caption{Examples of transformed FMNIST training images.}
		\label{fmnist_augmentation}
	\end{subfigure}
	\caption{Examples of MNIST and of FMNIST training images
		transformed by rotation, blurring and colour inversion.
		Such transformations are deployed
		in the data augmentation experiments
		of Subsection~\ref{subs:data_aug_pred}.
		Examples of untransformed MNIST and FMNIST training images
		are also displayed.}
	\label{data_augmentation}
\end{figure}

\section*{Acknowledgements}


The author would like to acknowledge the assistance given by Research IT
and the use of the Computational Shared Facility at The University of Manchester.
This work used the Cirrus UK National Tier-2 HPC Service at EPCC
(\url{http://www.cirrus.ac.uk})
funded by the University of Edinburgh and EPSRC (EP/P020267/1).
The author would like to thank Google
for the provision of free credit on Google Cloud Platform.

This work was presented
at two seminars supported by
a travel grant from the Dame Kathleen Ollerenshaw Trust,
which is gratefully acknowledged.

The author would like to dedicate this paper to the memory of his mother,
who died as this paper was being developed.


\bibliographystyle{imsart-nameyear}

\end{document}